\title{CO-Optimal Transport}
\author{Ievgen Redko\thanks{Authors contributed equally.} \\
  Univ Lyon, UJM-Saint-Etienne, CNRS, UMR 5516\\
 F-42023, Saint-Etienne \\
  \texttt{ievgen.redko@univ-st-etienne.fr} \And
  Titouan Vayer$^\ast$ \\
  Univ. Bretagne-Sud, CNRS, IRISA \\
  F-56000 Vannes\\
  \texttt{titouan.vayer@irisa.fr} \\
  \And
  R\'emi Flamary$^\ast$ \\
  \'Ecole Polytechnique, CMAP, UMR 7641 \\
  F-91120 Palaiseau \\
  \texttt{remi.flamary@polytechnique.edu} 
  \And Nicolas Courty$^\ast$ \\
  Univ. Bretagne-Sud, CNRS, IRISA \\
  F-56000 Vannes \\
  \texttt{nicolas.courty@irisa.fr} \\
}
\begin{document}

\maketitle

\newcommand{\nc}[1]{{\color{purple}#1}}
\newcommand{\tv}[1]{{\color{blue}#1}}
\newcommand{\rf}[1]{{\color{teal}#1}}
\newcommand{\ir}[1]{{\color{orange}#1}}
\newcommand{\iev}[2]{\textcolor{orange}{#2}}

\def\G{\pi}
\def\GG{\boldsymbol\G}
\def\Gs{\pi^s}
\def\GGs{\boldsymbol\pi^s}
\def\Gv{\pi^v}
\def\GGv{\boldsymbol\pi^v}
\def\w{{\bf w}}
\def\v{{\bf v}}
\def\x{{\bf x}}
\def\y{{\bf y}}
\def\X{{\bf X}}
\def\Y{{\bf Y}}
\def\L{{\bf L}}
\def\R{{\mathbb{R}}}
\def\vec{{\text{vec}}}
\def\tr{{\text{tr}}}
\def\one{{\mathbf{1}}}

\newcommand{\CCOT}{\texttt{CCOT}}
\newcommand{\CCOTGW}{\texttt{CCOT-GW}}
\newcommand{\MovieL}{\textsc{MovieLens}}

\newcommand{\COOT}{\text{COOT}}

\newcommand{\bz}{\mathbf{z}}
\newcommand{\bw}{\mathbf{w}}

\newcommand{\xbf}{\mathbf{x}}
\newcommand{\ybf}{\mathbf{y}}
\newcommand{\zbf}{\mathbf{z}}

\newcommand{\ie}{\textit{i.e.}}
\newcommand\abs[1]{\left\lvert#1\right\rvert}

\newtheorem{prop}{Proposition}
\newtheorem{theo}{Theorem}
\newtheorem{lemma}{Lemma}

\graphicspath{ {./imgs/} }




\begin{abstract}

Optimal transport (OT) is a powerful geometric and probabilistic tool for finding correspondences and measuring similarity between two distributions. Yet, its original formulation relies on the existence of a cost function between the samples of the two distributions, which makes it impractical when they are supported on different spaces. To circumvent this limitation, we propose a novel OT problem, named COOT for CO-Optimal Transport, that simultaneously optimizes two transport maps between both samples and features, contrary to other approaches that either discard the individual features by focusing on pairwise distances between samples or need to model explicitly the relations between them. We provide a thorough theoretical analysis of our problem, establish its rich connections with other OT-based distances and demonstrate its versatility with two machine learning applications in heterogeneous domain adaptation and co-clustering/data summarization, where COOT leads to performance improvements over the state-of-the-art methods. 

\end{abstract}

\allowdisplaybreaks

\section{Introduction}
\label{sec:intro}
The problem of comparing two sets of samples arises in many fields in machine learning, such as manifold alignment \cite{Cui:2014}, image registration \cite{Haker:2001},
unsupervised word and sentence translation \cite{rapp-1995-identifying} among others. When correspondences between the sets are known \textit{a priori}, one can align them with a global transformation of the features, \textit{e.g}, with the widely used \emph{Procrustes
analysis} \cite{oro2736,Goodall:1991}. For unknown correspondences, other popular alternatives to this method include correspondence free manifold alignment procedure \cite{Wang2009ManifoldAW}, soft assignment coupled with a Procrustes matching \cite{Rangarajan:1997} or Iterative closest point and its variants for 3D shapes \cite{Besl_ICP:1992,yang2020teaser}. 

{When one models} the considered sets of samples as empirical probability distributions{,} Optimal Transport (OT) framework {provides a solution} to find, without supervision, a soft-correspondence map between them given by an \emph{optimal coupling}. 
{OT-based approaches} have been used with success in numerous applications such as embeddings'
alignments \cite{alavarez:2019,Grave2018UnsupervisedAO} and Domain Adaptation (DA)
\cite{DBLP:journals/pami/CourtyFTR17} to name a few. {However, one important limit of using OT} for such tasks is that the two sets are assumed to lie in the same space so that the cost between samples across them can be computed. This
major drawback {does not allow OT to} handle correspondence estimation across heterogeneous
spaces, preventing its application in problems such as, for instance, heterogeneous DA (HDA). To circumvent this restriction, one
may rely on the Gromov-Wasserstein distance (GW)
\cite{memoli_gw}{: a} non-convex quadratic OT problem {that} finds the correspondences between two sets of samples based on their pairwise intra-domain
similarity (or distance) matrices. Such an approach was successfully applied {to} sets of samples {that} do not lie in the same
Euclidean space, \textit{e.g} for shapes \cite{solomon_entropic_2016}, word embeddings
\cite{alvarez-melis_gromov-wasserstein_2018} {and HDA \cite{ijcai2018-412} mentioned previously}. One important limit of GW is that it finds the samples' correspondences but discards the relations between the features by considering pairwise similarities only.


In this work, we propose a novel OT approach called CO-Optimal transport (\COOT) that
simultaneously infers the correspondences between the samples \emph{and} the features of two arbitrary
sets. 
{Our new formulation includes GW as a special case, and has an extra-advantage of working with raw data directly without needing to compute, store and choose computationally demanding similarity measures required for the latter.} Moreover, \COOT\ provides a meaningful mapping between both instances and
features across the two datasets thus having the virtue of being interpretable. We thoroughly analyze the proposed problem, derive an optimization procedure for it and
highlight several insightful links to other approaches. On the practical side,
we provide evidence of its versatility in machine learning by putting forward
two applications in HDA and co-clustering where our approach achieves state-of-the-art results. 

The rest of this paper is organized as follows. We introduce the \COOT\ problem
in Section 2 and give an optimization routine for solving it efficiently. In
Section 3, we show how \COOT\ is related to other OT-based distances and recover efficient solvers for some of them in particular cases. 
Finally, in Section 4, we present an experimental study providing highly
competitive results in {HDA} and co-clustering
compared to several baselines.

\section{CO-Optimal transport (\COOT)}
\label{sec:method}
\paragraph{Notations.} The simplex histogram with $n$ bins is denoted by $\Delta_{n}=\{\w \in (\mathbb{R}_{+})^{n}:\ \sum_{i=1}^{n} w_{i}=1\}$. We further denote by $\otimes$ the tensor-matrix multiplication, \ie, for a tensor $\L=(L_{i,j,k,l})$, $\L\otimes \mathbf{B}$ is the matrix $(\sum_{k,l} L_{i,j,k,l}B_{k,l})_{i,j}$. We use $\langle \cdot, \cdot \rangle$ for the matrix scalar product associated with the Frobenius norm $\|\cdot\|_{F}$ and $\otimes_{K}$ for the Kronecker product of matrices, \ie, $\mathbf{A} \otimes_{K} \mathbf{B}$ gives a tensor $\L$ such that $L_{i,j,k,l}=A_{i,j} B_{k,l}$. We note $\mathbb{S}_{n}$ the group of permutations of $\{1,\cdots,n\}=[\![n]\!]$. Finally, {we write $\bm{1}_d \in \mathbb{R}^d$ for a $d$-dimensional vector of ones} and denote all matrices by upper-case bold letters (\ie, $\X$) or lower-case Greek letters (\ie, $\GG$); all vectors are written in lower-case bold (\ie, $\xbf$).

\subsection{CO-Optimal transport optimization problem}

We consider two datasets represented by matrices $\X=[\x_1,\dots,\x_n]^T\in\mathbb{R}^{n\times d}$ and $\X'=[\xbf'_1,\dots,\xbf'_{n'}]^T\in\mathbb{R}^{n'\times d'}${, where in general we assume that $n\neq n'$ and $d\neq d'$.} In what follows, the rows of the datasets are denoted as \emph{samples} and its columns as \emph{features}. {We endow the samples $(\xbf_i)_{i \in [\![n]\!]}$ and $(\xbf'_i)_{i \in [\![n']\!]}$ with weights $\w=[w_1,\dots,w_n]^\top \in\Delta_n$ and $\w'=[w_1',\dots,w_{n'}']^\top \in\Delta_{n'}$ that both lie in the simplex so as to define empirical distributions supported on $(\xbf_i)_{i \in [\![n]\!]}$ and $(\xbf'_i)_{i \in [\![n']\!]}$. In addition to these distributions, we similarly associate weights given by vectors $\v\in\Delta_d$ and $\v'\in\Delta_{d'}$ with features. Note that
when no additional information is available about the data, all the weights' vectors
can be set as uniform.} 


We define the CO-Optimal Transport problem as follows:
\begin{equation}
 \label{eq:co-optimal-transport}
\min_{\begin{matrix}\GGs \in\Pi(\w,\w') \\ \GGv\in\Pi(\v,\v')\end{matrix}} \quad \sum_{i,j,k,l} L(X_{i,k},X'_{j,l})\GGs_{i,j}\GGv_{k,l} =\min_{\begin{matrix}\GGs \in\Pi(\w,\w') \\ \GGv\in\Pi(\v,\v')\end{matrix}} \langle \L(\X,\X') \otimes \GGs, \GGv \rangle\\
\end{equation}
where $L:\mathbb{R}\times \mathbb{R} \rightarrow \mathbb{R}_+$ is a divergence measure between 1D variables, $\L(\X,\X')$ is the $d\times d'\times n\times n'$ tensor of all
pairwise divergences between the elements of $\X$ and $\X'$, and
$\Pi(\cdot,\cdot)$ is the set of linear transport constraints defined for $\w,\w')$ as $\Pi(\w,\w') =\{\GG| \GG\geq \mathbf{0}, \GG \mathbf{1}_{n'}=\w, \GG^\top \mathbf{1}_{n}=\w'\}$ and similarly for $\v,\v'$. Note that problem \eqref{eq:co-optimal-transport} seeks for a simultaneous transport $\GGs$ between samples and a transport
$\GGv$ between features across distributions. In the following, we write $\COOT(\X,\X',\w,\w',\v,\v')$ (or $\COOT(\X,\X')$ when it is
clear from the context) to denote the objective value of the optimization problem \eqref{eq:co-optimal-transport}. 

Equation \eqref{eq:co-optimal-transport} can be also extended to the entropic regularized case favoured in the OT community for remedying the heavy computation burden of OT and reducing its sample complexity \cite{cuturi:2013,Altschuler:2017,genevay:2019}. This leads to the following problem:
\begin{equation}
 \label{eq:co-optimal-transport-reg}
    \min_{\GGs \in\Pi(\w,\w'), \GGv\in\Pi(\v,\v')} \langle \L(\X,\X') \otimes \GGs, \GGv \rangle
    + \Omega(\GGs,\GGv)
\end{equation}
where for $\epsilon_{1},\epsilon_{2}>0$, the regularization term writes as $\Omega(\GGs,\GGv)=  \epsilon_{1} H(\GGs|\w
\w'^{T})+\epsilon_{2}H(\GGv|\v\v'^{T})$ with
$H(\GGs|\w\w'^{T})=\sum_{i,j} \log(\frac{\pi^s_{i,j}}{w_{i}w'_{j}})\pi^s_{i,j}$
being the relative entropy. Note that similarly to OT \cite{cuturi:2013} and GW
\cite{peyre2016gromov}, adding the regularization term can lead to a more robust
estimation of the transport matrices but prevents them from being sparse.

\begin{figure*}[!t]
    \centering
    \includegraphics[width=\linewidth]{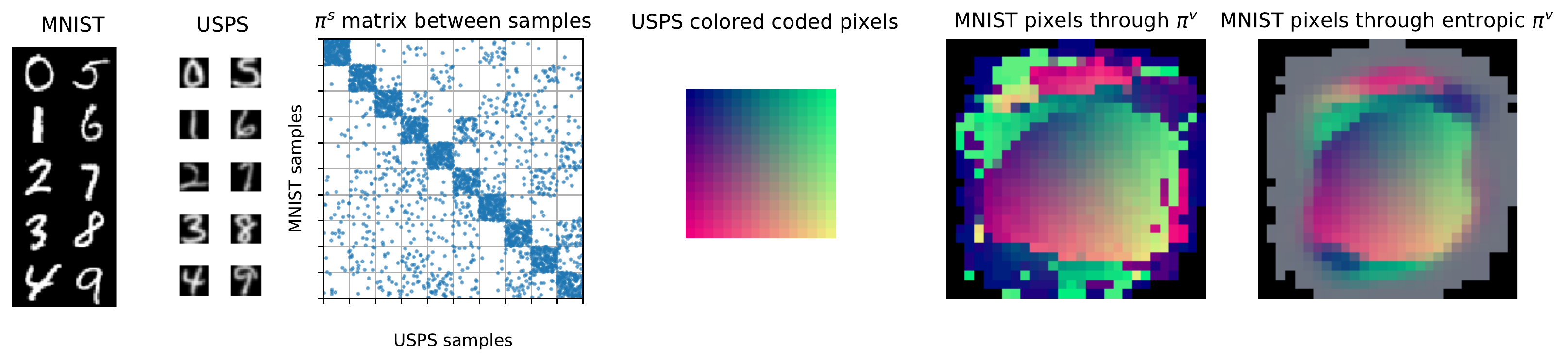}
    \caption{Illustration of \COOT\ between MNIST and USPS datasets. \textbf{(left)} samples from MNIST and USPS data sets; \textbf{(center left)} Transport matrix $\GGs$ between samples sorted by class; \textbf{(center)} USPS image with pixel{s} colored \emph{w.r.t.} their 2D position; \textbf{(center right)} transported colors on MNIST image using $\GGv$, black pixels correspond to non-informative MNIST pixels always at 0; \textbf{(right)} transported colors on MNIST image using $\GGv$ with entropic regularization.}
    \label{fig:mnist_usps}
\end{figure*}

\paragraph{Illustration of \COOT} In order to illustrate {our proposed} \COOT\
{method} and to explain the intuition behind it, we solve the optimization problem \eqref{eq:co-optimal-transport} {using the algorithm described in section \ref{sec:optim}}
between two classical digit recognition datasets: MNIST and USPS. We choose these particular datasets for our illustration as they contain images of different
resolutions (USPS is 16$\times$16 and MNIST is 28$\times$28) that belong to the
same classes (digits between 0 and 9). Additionally, the digits are also
slightly differently centered as illustrated on the examples in the left part of
Figure \ref{fig:mnist_usps}. {Altogether,} this means that without specific
pre-processing, the images do not lie in the same {topological} space and thus
cannot be compared directly using conventional distances. We randomly select 300
images per class in each dataset, normalize magnitudes of pixels to $[0,1]$ and consider digit images as \emph{samples} while each pixel acts as a \emph{feature} leading to 256 and 784 features for USPS and MNIST respectively. We use
uniform weights for $\w,\w'$ and normalize average values of each pixel for $\v,\v'$ in order to discard non-informative ones that are always equal to 0.

The result of solving problem \eqref{eq:co-optimal-transport} is reported in
Figure \ref{fig:mnist_usps}. In the center-left part, we provide the coupling
$\GGs$ between the samples, {\textit{i.e} the different images}, sorted by class and observe that 67\% of mappings occur between the samples from the same class as indicated by block diagonal structure of the coupling matrix.
The coupling $\GGv$, in its turn, describes the relations between {the features, \textit{i.e} the pixels,} in both domains. 
To visualize it, we color-code the pixels of the source USPS image and use $\GGv$ to transport the colors on a target MNIST image so that its pixels are defined as  convex combinations of colors from the former with coefficients given by $\GGv$. 
The corresponding results are shown in the right part of
Figure~\ref{fig:mnist_usps} for both the original \COOT\ and its entropic
regularized counterpart. From these two images, we can observe that colored
pixels appear only in the central areas and exhibit a strong spatial coherency despite the fact that 
{the geometric structure of the image is totally unknown to the optimization problem, as each pixel is treated as an independent variable.}
 \COOT\ has recovered a meaningful spatial transformation between the 
two datasets in a completely unsupervised way, {different from trivial rescaling of images that one may expect when aligning USPS digits occupying the full image space and MNIST digits
liying in the middle of it ({for further evidence, other visualizations are given in the
\hyperlink{appendix}{supplementary material}}).

\paragraph{\COOT\ as a billinear program} \COOT\ is an indefinite Bilinear Program (BP)
problem \cite{gallo:1977}: a special case of a Quadratic Program (QP) with linear constraints 
for which  there exists an optimal solution lying on extremal points of the polytopes
$\Pi(\w,\w')$ and $\Pi(\v,\v')$ \cite{pardalos:1987,horst1996global}. When $n=n',d=d'$ and weights $\w=\w'=\frac{\bm{1}_n}{n}, \v=\v'=\frac{\bm{1}_d}{d}$ are uniform, Birkhoff’s theorem
\cite{birkhoff:1946} states that the set of extremal points of
$\Pi(\frac{\one_{n}}{n},\frac{\one_{n}}{n})$ and
$\Pi(\frac{\one_{d}}{d},\frac{\one_{d}}{d})$ are the set of permutation matrices
so that there exists an optimal solution $(\GGs_{*},\GGv_{*})$ 
which transport maps are supported on two permutations 
$\sigma_{*}^{s},\sigma_{*}^{v} \in \mathbb{S}_{n} \times \mathbb{S}_{d}$. 

The BP problem is also related to the Bilinear Assignment Problem (BAP)
where $\GGs$ and $\GGv$ are searched in the set of permutation matrices. The latter was shown to be NP-hard if $d=O(\sqrt[\leftroot{-2}\uproot{2}r]{n})$ for fixed $r$ and solvable in polynomial time if $d=O(\sqrt{\log(n)})$
\cite{Custic:2016}. In this case, we look for the best permutations of the rows
and columns of our datasets that lead to the smallest cost. \COOT\ provides a
tight convex relaxation of the BAP by 1) relaxing the constraint 
set of permutations into the convex set of doubly stochastic matrices and 2) ensuring that two problems are equivalent, \ie, one can always find a pair of permutations that minimizes \eqref{eq:co-optimal-transport}, as explained in the paragraph above.



\label{sec:theoretical}

\subsection{{Properties of \COOT}}
\label{sec:}
Finding a meaningful similarity measure between datasets is useful in many machine learning tasks as pointed out, \textit{e.g} in \cite{alvarezmelis2020geometric}. To this end, \COOT\ induces a distance between datasets $\X$ and $\X'$ and it vanishes $\textit{iff}$ they are the same up to a permutation of rows and columns as established below\footnote{All proofs and theoretical results of this paper are detailed in the \hyperlink{appendix} {supplementary materials}.}.
\label{sec:metric_properties}
\begin{prop}[\COOT\ is a distance]
Suppose $L=|\cdot|^{p}, p \geq 1$, $n=n',d=d'$ and that the weights $\w,\w',\v,\v'$ are uniform. Then $\COOT(\X,\X')=0$ \textit{iff} there exists a permutation of the samples $\sigma_{1} \in \mathbb{S}_{n}$ and of the features $\sigma_{2} \in \mathbb{S}_{d}$, \textit{s.t}, $\forall i,k \ \X_{i,k}=\X'_{\sigma_{1}(i),\sigma_{2}(k)}$. Moreover, it is symmetric and satisfies the triangular inequality as long as $L$ satisfies the triangle inequality, \ie, $\COOT(\X,\X'')\leq \COOT(\X,\X')+\COOT(\X',\X'').$
\end{prop}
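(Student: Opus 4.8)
The plan is to prove the three assertions separately. For the characterization of $\COOT(\X,\X')=0$ I will exploit the bilinear-program structure recalled just above the statement, for symmetry a transposition argument, and for the triangle inequality a gluing of the two couplings through the intermediate dataset.

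\textbf{Separation.} For $(\Leftarrow)$, suppose $\X_{i,k}=\X'_{\sigma_1(i),\sigma_2(k)}$ for some $\sigma_1\in\mathbb{S}_n$, $\sigma_2\in\mathbb{S}_d$. Since the weights are uniform, the plan $\GGs$ placing mass $\tfrac1n$ on each pair $(i,\sigma_1(i))$ lies in $\Pi(\w,\w')$ and the plan $\GGv$ placing mass $\tfrac1d$ on each $(k,\sigma_2(k))$ lies in $\Pi(\v,\v')$; evaluating the objective of \eqref{eq:co-optimal-transport} at $(\GGs,\GGv)$ leaves only the terms $\tfrac1{nd}L(X_{i,k},X'_{\sigma_1(i),\sigma_2(k)})$, each equal to $\tfrac1{nd}|X_{i,k}-X'_{\sigma_1(i),\sigma_2(k)}|^p=0$ by hypothesis, so $\COOT(\X,\X')=0$ as $L\ge0$. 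For $(\Rightarrow)$, recall that \eqref{eq:co-optimal-transport} admits an optimal solution supported on a pair of extremal points of $\Pi(\w,\w')\times\Pi(\v,\v')$, which under uniform weights are rescaled permutation matrices by Birkhoff's theorem; writing such a minimizer as $\tfrac1n P_{\sigma_1}$, $\tfrac1d P_{\sigma_2}$ with $P_{\sigma_1},P_{\sigma_2}$ the permutation matrices of $\sigma_1\in\mathbb{S}_n$, $\sigma_2\in\mathbb{S}_d$, the optimal value equals $\tfrac1{nd}\sum_{i,k}L(X_{i,k},X'_{\sigma_1(i),\sigma_2(k)})$. Being zero and a sum of nonnegative terms, every term vanishes, and since $L=|\cdot|^p$ with $p\ge1$ this forces $X_{i,k}=X'_{\sigma_1(i),\sigma_2(k)}$ for all $i,k$.

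\textbf{Symmetry.} Since $L(a,b)=L(b,a)$, the map $(\GGs,\GGv)\mapsto(\GGs^\top,\GGv^\top)$ carries feasible pairs for $\COOT(\X,\X')$ to feasible pairs for $\COOT(\X',\X)$ while preserving the objective value, so the two values coincide.

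\textbf{Triangle inequality.} Let $(\GGs_1,\GGv_1)$ and $(\GGs_2,\GGv_2)$ be optimal for $\COOT(\X,\X')$ and $\COOT(\X',\X'')$, and glue them through $\X'$ by $\GGs:=\GGs_1\operatorname{diag}(\w')^{-1}\GGs_2$ and $\GGv:=\GGv_1\operatorname{diag}(\v')^{-1}\GGv_2$ (the inverses exist because uniform weights are strictly positive). A short marginal computation gives $\GGs\in\Pi(\w,\w'')$ and $\GGv\in\Pi(\v,\v'')$, so the pair is admissible for $\COOT(\X,\X'')$. Expanding $\langle\L(\X,\X'')\otimes\GGs,\GGv\rangle$ and writing $\pi^s_{i,m}\pi^v_{k,p}$ as the sum over intermediate indices $(j,l)$ of the nonnegative weights $\tfrac{(\pi^s_1)_{i,j}(\pi^s_2)_{j,m}}{w'_j}\tfrac{(\pi^v_1)_{k,l}(\pi^v_2)_{l,p}}{v'_l}$, I apply $L(X_{i,k},X''_{m,p})\le L(X_{i,k},X'_{j,l})+L(X'_{j,l},X''_{m,p})$ inside each term and split the sum in two; summing out $(m,p)$ in the first piece collapses the intermediate plans and leaves $\sum_{i,j,k,l}L(X_{i,k},X'_{j,l})(\pi^s_1)_{i,j}(\pi^v_1)_{k,l}=\COOT(\X,\X')$, and summing out $(i,k)$ in the second leaves $\COOT(\X',\X'')$. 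Hence $\COOT(\X,\X'')\le\langle\L(\X,\X'')\otimes\GGs,\GGv\rangle\le\COOT(\X,\X')+\COOT(\X',\X'')$. The one step needing genuine care is this gluing: one must verify that the composed objects are bona fide transport plans and, above all, that the cross-weights factor so that marginalizing the intermediate indices exactly reconstructs the products $(\pi^s_1)(\pi^v_1)$ and $(\pi^s_2)(\pi^v_2)$ — this is what makes the indefinite bilinear objective split cleanly into the two sub-problem values. Everything else is bookkeeping.
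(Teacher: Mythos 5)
Your proof is correct and follows essentially the same route as the paper's: the same scaled-permutation construction via Birkhoff's theorem and extremal points for the identity of indiscernibles, and the same gluing $\GGs_1\operatorname{diag}(\w')^{-1}\GGs_2$, $\GGv_1\operatorname{diag}(\v')^{-1}\GGv_2$ with the pointwise triangle inequality on $L$ and marginalization of the intermediate indices for the triangle inequality. No gaps to report.
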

{Note that in the general case when $n\neq n', d\neq d'$, positivity and triangle inequality still hold but $\COOT(\X,\X')>0$.}
Interestingly, our result generalizes the metric property proved in \cite{faliszewski19} for the election isomophism problem with this latter result being {valid} only for the BAP case (for a discussion on the connection between \COOT\ and the work of \cite{faliszewski19}, see \hyperlink{appendix}{supplementary materials}). Finally, we note that this metric property means that \COOT\ can be used as a divergence in a large
number of potential applications\iev{where GW was proved to be efficient}{} as, for instance, in generative learning
\cite{bunne2019learning}.



\label{sec:optim}

\subsection{Optimization algorithm and complexity \label{sec:optimsec}}

        \begin{algorithm}[t]
        \caption{\label{alg:bcd}
       BCD for \COOT}
        \begin{algorithmic}[1]
        \State $\pi^{s}_{(0)}\leftarrow \w\w'^{T},\pi^{v}_{(0)}\leftarrow \v\v'^{T}, k \leftarrow 0$
          \While {$k < $ maxIt {\bf and} $err >$ 0} 
          \State $\GGv_{(k)} \leftarrow OT(\v,\v',\L(\X,\X')\otimes \GGs_{(k-1)})$ // \text{ OT problem on the samples}
          \State $\GGs_{(k)} \leftarrow OT(\w,\w',\L(\X,\X')\otimes \GGv_{(k-1)})$ // \text{ OT problem on the features}
          \State $err \leftarrow ||\GGv_{(k-1)} - \GGv_{(k)}||_F$
          \State $k\leftarrow k+1$          
          \EndWhile
        \end{algorithmic}
        \end{algorithm}


\begin{wrapfigure}[14]{r}{0.33\linewidth}
    \vspace{-8mm}
\includegraphics[width=1\linewidth]{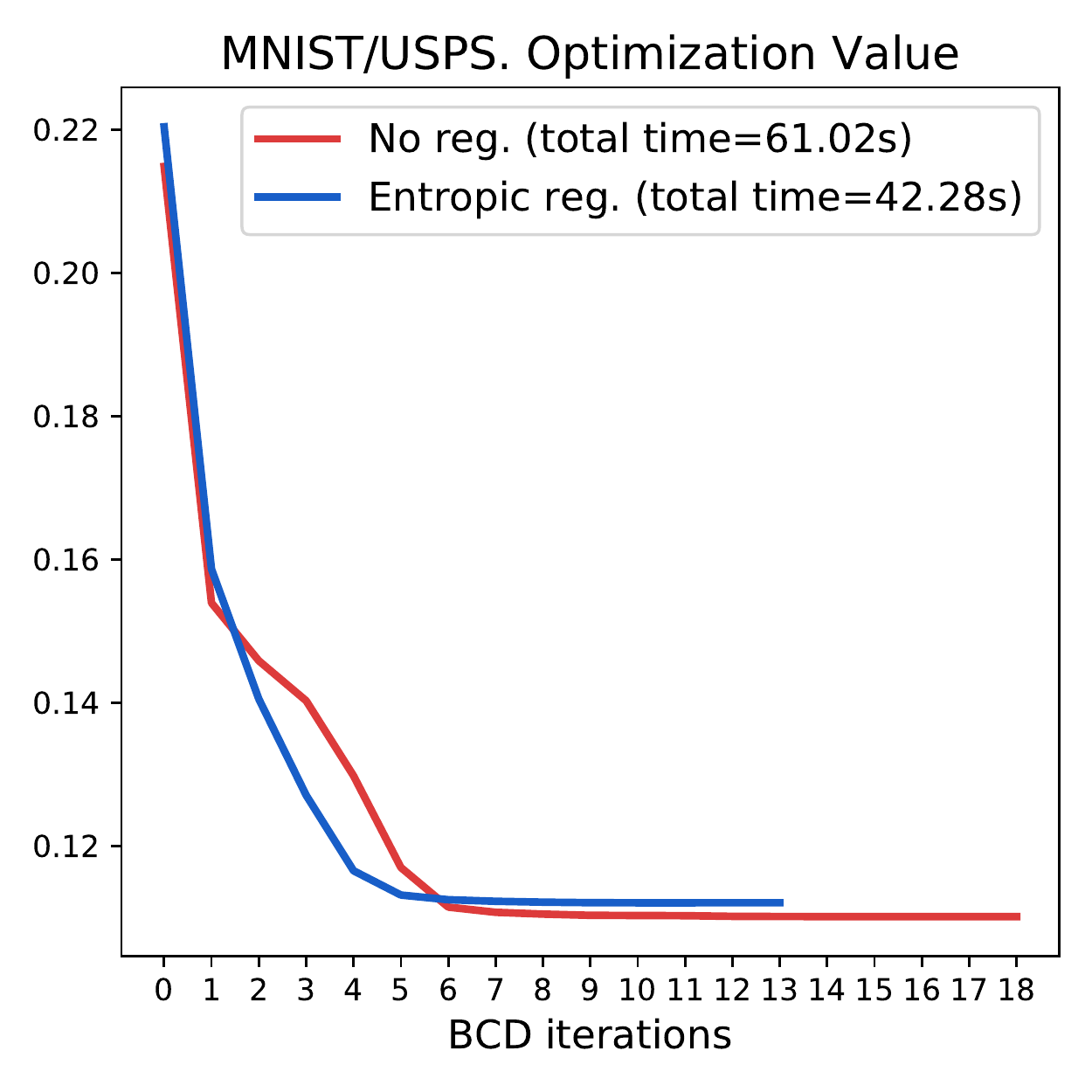}
\caption{{\small \COOT\ loss during the BCD for the MNIST/USPS task.}}
\label{fig:loss_bcd}
\end{wrapfigure}

Even though solving  \COOT\ exactly may be NP-hard,
in practice computing a solution can be done rather efficiently. To this end, we propose to use
Block Coordinate Descent (BCD) that consists in iteratively solving the problem
for $\GGs$ or $\GGv$ with the other kept fixed. 
Interestingly, this boils down to solving at each step a classical OT problem that requires $O(n^3\log(n))$ operations with a network simplex algorithm. The pseudo-code of the proposed algorithm, known as the ``mountain climbing procedure"
\cite{konno:1976},  is given in Algorithm \ref{alg:bcd} and is guaranteed to decrease the loss after each update and so to
converge within a finite number of iterations \cite{horst1996global}. We also note that at
each iteration one needs to compute the equivalent cost matrix $L(\X,\X')\otimes \GG^{(\cdot)}$ which
has a complexity of $O(ndn'd')$. However, one can reduce it using Proposition 1 from
\cite{peyre2016gromov} for the case when $L$ is the squared Euclidean distance $|\cdot|^{2}$ or the Kullback-Leibler divergence. In this case, the overall computational complexity becomes $O(\min\{(n+n')dd'+n'^{2}n;(d+d')nn'+d'^{2}d\})$. In practice, we observed in the numerical experiments that the BCD converges in few iterations (see \textit{e.g.} Figure \ref{fig:loss_bcd}). We refer the interested reader to the \hyperlink{appendix}{supplementary materials} for further details. Finally, we can use the same BCD procedure for the entropic regularized version of \COOT\
\eqref{eq:co-optimal-transport-reg} where each iteration
an entropic regularized OT problem can be solved efficiently
using Sinkhorn's algorithm \cite{cuturi:2013} with several possible improvements
\cite{Altschuler:2017,Altschuler:2019,screekhorn:2019}. Note that this procedure can be easily adapted in the same way to include unbalanced OT problems \cite{chizat_unbalanced} as well.



\section{Relation with other OT distances}
\label{sec:gromov}

\paragraph{Gromov-Wasserstein}
The \COOT\ problem is defined for arbitrary matrices $\X\in \R^{n\times d}, \X'\in
\R^{n'\times d'}$ and so can be readily used to compare pairwise similarity matrices between the samples $\mathbf{C}=\left(c(\xbf_i,\xbf_j)_{i,j}\right)\in \R^{n\times n}, \mathbf{C}'=\left(c'(\xbf_k',\xbf_l')\right)_{k,l} \in \R^{n'\times n'}$ for some $c,c'$. To avoid redundancy, we use the term ``similarity" for both similarity and distance functions in what follows. This situation arises in applications dealing with relational data, \textit{e.g}, in a graph context
\cite{vay2019fgw} or deep metric alignement \cite{gwcnn}. These problems have been
successfully tackled recently using the Gromov-Wasserstein (GW) distance
\cite{memoli_gw} which, given $\mathbf{C}\in \R^{n\times n}$ and $\mathbf{C}'\in \R^{n'\times n'}$, aims at solving: 
\begin{equation}
\label{eq:gromov}
GW(\mathbf{C},\mathbf{C}',\w,\w')=\!\!\!\!\min_{\GGs \in\Pi(\w,\w')} \langle \L(\mathbf{C},\mathbf{C}') \otimes \GGs, \GGs \rangle.
\end{equation}
Below, we explicit the link between GW and \COOT\ using a reduction of a concave QP to an associated BP problem established in \cite{Konno1976} and show that
they are equivalent when working with squared Euclidean distance matrices $\mathbf{C}\in \R^{n\times n}, \mathbf{C}' \in \R^{n'\times n'}$. 

\begin{prop}
\label{concavity_gw_theo}
Let $L=|\cdot|^{2}$ and suppose that $\mathbf{C} \in \R^{n\times n},\mathbf{C}' \in \R^{n'\times n'}$ are squared Euclidean distance matrices such that $\mathbf{C}=\xbf \mathbf{1}_{n}^{T}+\mathbf{1}_{n}\xbf^{T}-2\X\X^{T}, \mathbf{C}'=\xbf' \mathbf{1}_{n'}^{T}+\mathbf{1}_{n'}\xbf'^{T}-2\X'\X'^{T}$ with $\xbf=\text{diag}(\X\X^T),\xbf'=\text{diag}(\X'\X'^T)$. Then, the GW problem can be written as {a concave quadratic program (QP) which Hessian reads} $\mathbf{Q}=-4*\X\X^T \otimes_{K} \X'\X'^T$.


\end{prop}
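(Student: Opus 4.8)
The plan is to expand the GW objective $\langle \L(\mathbf{C},\mathbf{C}')\otimes\GGs,\GGs\rangle$ explicitly, using the specific form $L=|\cdot|^2$ together with the squared-Euclidean structure of $\mathbf{C},\mathbf{C}'$, and to show that after dropping terms that are constant on the transport polytope $\Pi(\w,\w')$, what remains is a quadratic form in $\vec(\GGs)$ whose Hessian is $\mathbf{Q}=-4\,\X\X^T\otimes_K\X'\X'^T$. First I would write $\L(\mathbf{C},\mathbf{C}')_{i,j,k,l}=|C_{i,k}-C'_{j,l}|^2 = C_{i,k}^2 - 2C_{i,k}C'_{j,l} + C_{j,l}'^2$, so the tensor splits into three pieces. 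The pieces $C_{i,k}^2$ and $C_{j,l}'^2$ depend on only one pair of indices each; when contracted against $\GGs_{i,j}\GGs_{k,l}$ and summed, the marginal constraints $\GGs\mathbf{1}_{n'}=\w$, $\GGs^\top\mathbf{1}_n=\w'$ collapse them to constants independent of $\GGs$ (this is the standard reduction, e.g. as in \cite{peyre2016gromov}). Hence, up to an additive constant, $GW$ equals $-2\sum_{i,j,k,l} C_{i,k}C'_{j,l}\GGs_{i,j}\GGs_{k,l}$.

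Next I would substitute $\mathbf{C}=\xbf\mathbf{1}_n^T+\mathbf{1}_n\xbf^T-2\X\X^T$ and the analogous expression for $\mathbf{C}'$ into the cross term and expand the product $C_{i,k}C'_{j,l}$ into nine terms. The key observation is that any term in which $C_{i,k}$ contributes a factor depending on only $i$ or only $k$ (namely the $\xbf\mathbf{1}_n^T$ and $\mathbf{1}_n\xbf^T$ parts), or similarly for $C'_{j,l}$, will again be annihilated down to a constant after contraction against $\GGs_{i,j}\GGs_{k,l}$ and summation, by the same marginal argument: e.g. a factor $x_i$ pulls out $\sum_j \GGs_{i,j}=w_i$, and a factor $x'_j$ pulls out $\sum_i \GGs_{i,j}=w'_j$, etc. Only the single term $(-2\X\X^T)_{i,k}(-2\X'\X'^T)_{j,l}=4(\X\X^T)_{i,k}(\X'\X'^T)_{j,l}$ survives as genuinely quadratic in $\GGs$, giving a contribution $-2\cdot 4\sum_{i,j,k,l}(\X\X^T)_{i,k}(\X'\X'^T)_{j,l}\GGs_{i,j}\GGs_{k,l}$. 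Recognizing $\sum_{i,j,k,l}(\X\X^T)_{i,k}(\X'\X'^T)_{j,l}\GGs_{i,j}\GGs_{k,l}=\vec(\GGs)^T(\X\X^T\otimes_K\X'\X'^T)\vec(\GGs)$ (with a consistent indexing convention for $\vec$ and $\otimes_K$), I conclude that $GW$ agrees, up to an affine function that is constant on $\Pi(\w,\w')$, with $\tfrac12\vec(\GGs)^T\mathbf{Q}\vec(\GGs)$ where $\mathbf{Q}=-4\,\X\X^T\otimes_K\X'\X'^T$. Since $\X\X^T$ and $\X'\X'^T$ are positive semidefinite (Gram matrices), their Kronecker product is PSD, so $\mathbf{Q}$ is negative semidefinite and the resulting QP is concave.

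The main obstacle is purely bookkeeping: one must track the index/vectorization conventions carefully so that the tensor contraction $\sum_{i,j,k,l}L_{i,j,k,l}\GGs_{i,j}\GGs_{k,l}$ is correctly identified with $\vec(\GGs)^T\mathbf{Q}\vec(\GGs)$ for the stated $\mathbf{Q}=-4\X\X^T\otimes_K\X'\X'^T$ — in particular, checking that the roles of sample indices $(i,k)$ and feature-of-$\X'$ indices $(j,l)$ are paired with the two Kronecker factors in the order claimed, and that the leftover terms are \emph{exactly} affine (not merely lower-order) on the polytope. I would verify the latter by showing each discarded term reduces to either a fixed constant or a linear functional $\langle\mathbf{A},\GGs\rangle$ for some fixed matrix $\mathbf{A}$ built from $\xbf,\xbf',\w,\w'$; linear terms do not affect the Hessian, so they are harmless for the concavity/Hessian claim. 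The positive-semidefiniteness of the Gram matrices then finishes the argument with no further computation.
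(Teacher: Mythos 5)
Your proposal is correct and takes essentially the same route as the paper's proof: the paper likewise removes the squared terms via the identity $\L(\mathbf{C},\mathbf{C}')\otimes\GGs=c_{\mathbf{C},\mathbf{C}'}-2\mathbf{C}\GGs\mathbf{C}'$ from Peyr\'e et al.\ and then expands $\langle \mathbf{C}\GGs\mathbf{C}',\GGs\rangle$ using $\mathbf{C}=\xbf\one_{n}^{T}+\one_{n}\xbf^{T}-2\X\X^{T}$ (in trace/matrix form rather than your index form), discarding the terms that are constant or linear on $\Pi(\w,\w')$ and keeping only $4\tr({\GGs}^{T}\X\X^{T}\GGs\X'\X'^{T})$ as the genuinely quadratic part, whose negative multiple is a Kronecker product of Gram matrices and hence yields a concave QP. The only blemish is a constant-factor slip (your computed quadratic coefficient $-8$ is inconsistent with writing the quadratic part as $\tfrac12\vec(\GGs)^{T}\mathbf{Q}\vec(\GGs)$ with $\mathbf{Q}=-4\,\X\X^{T}\otimes_{K}\X'\X'^{T}$), but the paper's own write-up is equally loose about this normalization and it does not affect the negative semidefiniteness or the concavity claim.
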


When working with arbitrary similarity matrices, \COOT\ provides a lower-bound for GW and using Proposition \ref{concavity_gw_theo} we can prove that both problems become equivalent in the Euclidean setting.
\begin{prop}
\label{prop:cot_equal_gw}
Let $\mathbf{C} \in \R^{n\times n},\mathbf{C}' \in \R^{n'\times n'}$ be any symmetric matrices, then: 
$$\COOT(\mathbf{C},\mathbf{C}',\w,\w',\w,\w')\leq GW(\mathbf{C},\mathbf{C}',\w,\w').$$
The converse is also true {under the hypothesis of Proposition \ref{concavity_gw_theo}}. In this case, if $(\GGs_{*},\GGv_{*})$ is an optimal solution of
\eqref{eq:co-optimal-transport}, then both $\GGs_{*},\GGv_{*}$ are solutions of
\eqref{eq:gromov}. Conversely, if $\GGs_{*}$ is an optimal solution of
\eqref{eq:gromov}, then $(\GGs_{*},\GGs_{*})$ is an optimal solution for
\eqref{eq:co-optimal-transport} .
\end{prop}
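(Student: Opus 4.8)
The plan is to read off the stated inequality from a one-line restriction of the feasible set, and to get the converse --- under the hypothesis of Proposition~\ref{concavity_gw_theo} --- from the reduction of a concave quadratic program to an associated bilinear program \cite{Konno1976}, which here amounts to a single polarisation identity. For the inequality, observe that in $\COOT(\mathbf{C},\mathbf{C}',\w,\w',\w,\w')$ the feature weights coincide with the sample weights, so $\Pi(\v,\v')=\Pi(\w,\w')$ and, for every $\GGs\in\Pi(\w,\w')$, the pair $(\GGs,\GGs)$ is admissible in \eqref{eq:co-optimal-transport}. Substituting $\GGv=\GGs$ into its objective returns exactly the Gromov-Wasserstein objective $\langle \L(\mathbf{C},\mathbf{C}')\otimes\GGs,\GGs\rangle$ of \eqref{eq:gromov}, so minimising over the strictly larger set $\Pi(\w,\w')\times\Pi(\w,\w')$ cannot increase the value:
\[
\COOT(\mathbf{C},\mathbf{C}',\w,\w',\w,\w')\le GW(\mathbf{C},\mathbf{C}',\w,\w').
\]
This step uses only feasibility, so it holds for arbitrary symmetric $\mathbf{C},\mathbf{C}'$.

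For the reverse inequality I would expand $L=|\cdot|^{2}$ and use the prescribed marginals of \emph{both} $\GGs$ and $\GGv$ to absorb the two square terms into a single additive constant $A=A(\mathbf{C},\mathbf{C}',\w,\w')$ that is the \emph{same} for the two problems, leaving $\langle \L(\mathbf{C},\mathbf{C}')\otimes\GGs,\GGv\rangle = A + b(\GGs,\GGv)$ and $\langle \L(\mathbf{C},\mathbf{C}')\otimes\GGs,\GGs\rangle = A + q(\GGs)$, where $b(\GGs,\GGv):=-2\langle \mathbf{C}\GGs\mathbf{C}',\GGv\rangle$ is a symmetric bilinear form (because $\mathbf{C},\mathbf{C}'$ are symmetric) and $q(\GGs):=b(\GGs,\GGs)$. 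Polarising gives the identity
\[
b(\GGs,\GGv)=\tfrac12 q(\GGs)+\tfrac12 q(\GGv)-\tfrac12\, q(\GGs-\GGv).
\]
Since $\GGs-\GGv$ has zero marginals, Proposition~\ref{concavity_gw_theo} (the Gromov-Wasserstein quadratic is concave on the affine hull of the transport polytope, with negative semidefinite Hessian) ensures $q(\GGs-\GGv)\le 0$; concretely, the rank-one pieces $\xbf\mathbf{1}^{\top}$ and $\mathbf{1}\xbf^{\top}$ of the Euclidean distance matrices are annihilated by a zero-marginal matrix, so $-\tfrac12\, q(\GGs-\GGv)$ equals a positive multiple of $\|\X^{\top}(\GGs-\GGv)\X'\|_{F}^{2}\ge 0$. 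From the identity it then follows that
\[
A+b(\GGs,\GGv)\ \ge\ \tfrac12\big(A+q(\GGs)\big)+\tfrac12\big(A+q(\GGv)\big)\ \ge\ GW(\mathbf{C},\mathbf{C}',\w,\w'),
\]
the last step because $A+q(\GGs)$ and $A+q(\GGv)$ are values of the objective of \eqref{eq:gromov} at feasible points. Minimising the left-hand side over $(\GGs,\GGv)$ gives $\COOT\ge GW$, hence equality under the hypothesis of Proposition~\ref{concavity_gw_theo}.

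For the transfer of minimisers, if $(\GGs_{*},\GGv_{*})$ solves \eqref{eq:co-optimal-transport} then the chain $GW=A+b(\GGs_{*},\GGv_{*})\ge\tfrac12\big(A+q(\GGs_{*})\big)+\tfrac12\big(A+q(\GGv_{*})\big)\ge GW$ is saturated, forcing $A+q(\GGs_{*})=A+q(\GGv_{*})=GW$ and $q(\GGs_{*}-\GGv_{*})=0$; in particular $\GGs_{*}$ and $\GGv_{*}$ are both minimisers of \eqref{eq:gromov}. Conversely, if $\GGs_{*}$ minimises \eqref{eq:gromov} then $A+b(\GGs_{*},\GGs_{*})=A+q(\GGs_{*})=GW=\COOT$, so $(\GGs_{*},\GGs_{*})$ minimises \eqref{eq:co-optimal-transport}. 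The one genuinely delicate point is the bookkeeping in the reverse-inequality step: checking that expanding $|C_{i,k}-C'_{j,l}|^{2}$ produces the \emph{same} constant $A$ for \COOT\ and GW --- which is exactly where the hypothesis that $\GGv$ also has marginals $\w,\w'$ is used --- and that the rank-one parts of the distance matrices genuinely cancel against a zero-marginal difference, so that the residual term is the non-negative quantity supplied by Proposition~\ref{concavity_gw_theo}; the rest is feasibility plus an elementary polarisation argument.
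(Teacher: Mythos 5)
Your proof is correct, and while the first half coincides with the paper's argument (for the inequality $\COOT\leq GW$ the paper likewise just plugs the pair $(\GGs,\GGs)$, or the optimal GW coupling, into \eqref{eq:co-optimal-transport} and invokes suboptimality), the converse is handled by a genuinely different route. The paper first rewrites GW, under the Euclidean hypothesis, as an explicit concave QP in $\text{vec}(\GGs)$ with Hessian $-4\X\X^T\otimes_K\X'\X'^T$ (Lemma \ref{concavity_gw_theo_sup}), rewrites \COOT\ as the associated bilinear program with the same data, and then invokes the general QP--BP equivalence theorem of Konno (Theorem \ref{equivalence_theo}), whose proof proceeds via first-order optimality of the bilinear problem and positive semi-definiteness to force $\mathbf{Q}(\xbf^*-\ybf^*)=0$. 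You instead keep the cross term $b(\GGs,\GGv)=-2\langle \mathbf{C}\GGs\mathbf{C}',\GGv\rangle$ as a symmetric bilinear form, absorb the square terms into a common constant $A$ using the fixed marginals of both couplings, and use the polarisation identity together with $q(\GGs-\GGv)=-8\|\X^{T}(\GGs-\GGv)\X'\|_F^2\leq 0$ on zero-marginal differences to get $A+b(\GGs,\GGv)\geq\tfrac12(A+q(\GGs))+\tfrac12(A+q(\GGv))\geq GW$ pointwise, from which the equality of values and the transfer of minimisers follow by saturation of the chain. Your argument is more self-contained and elementary, and it isolates precisely where the Euclidean hypothesis enters (concavity of the quadratic form only along zero-marginal directions, via the annihilation of the rank-one parts of $\mathbf{C},\mathbf{C}'$ -- a conditional concavity that is slightly weaker than the global negative semi-definiteness established in Proposition \ref{concavity_gw_theo}); the paper's route is more modular, since the same Konno theorem is reused elsewhere (e.g.\ for the InvOT equivalence in Section \ref{sec:invot}) and does not require redoing the cancellation computation in each setting. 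Your bookkeeping claims check out: $b$ is symmetric because $\mathbf{C},\mathbf{C}'$ are, the constant $A=\sum_{i,k}C_{i,k}^2w_iw_k+\sum_{j,l}C'^2_{j,l}w'_jw'_l$ is indeed identical for the two problems since $\GGs,\GGv$ and the GW coupling all lie in $\Pi(\w,\w')$, and the saturation argument correctly forces $A+q(\GGs_*)=A+q(\GGv_*)=GW$, hence both statements about optimal solutions.
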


\begin{wrapfigure}[14]{r}{0.25\linewidth}
    \vspace{-5mm}
\includegraphics[width=1\linewidth]{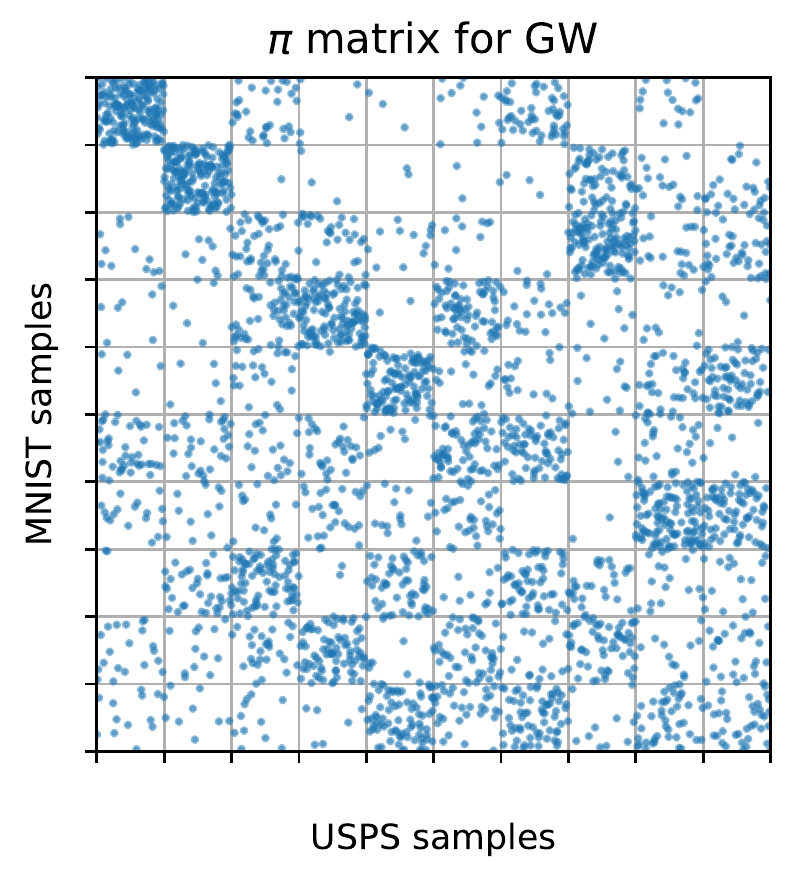}
\caption{{\small GW samples' coupling for MNIST-USPS task}}
\label{fig:gw_coupling_mnist_usps}
\end{wrapfigure}

Under the hypothesis of Proposition \ref{concavity_gw_theo} we know that there exists an optimal solution for the
\COOT\ problem of the form $(\GG_{*},\GG_{*})$, where $\GG_{*}$ is an optimal
solution of the GW problem. This gives a conceptually very simple fixed-point
procedure to compute an optimal solution of GW where one optimises over one coupling only and sets $\GGs_{(k)}=\GGv_{(k)}$ at each iteration of Algorithm \ref{alg:bcd}.
Interestingly enough, in the concave
setting, these iterations are exactly equivalent to
the Frank Wolfe algorithm described in \cite{vay2019fgw} for solving GW. It also corresponds to a Difference of Convex Algorithm (DCA) \cite{tao2005dc,yuille2003concave} where
the concave function is approximated at each iteration by its linear
majorization. When used for entropic
regularized \COOT, the resulting algorithm also recovers exactly the projected gradients
iterations proposed in \cite{peyre2016gromov} for solving the entropic regularized version of GW. {We refer the reader to the \hyperlink{appendix}{supplementary materials} for more details.} 

To conclude, we would like to stress out that \COOT\ is much more than a generalization of GW and that is for multiple reasons. First, it can be used on raw data without requiring to choose or compute the similarity matrices, that can be prohibitively costly, for instance, when dealing with shortest path distances in graphs, and to store them ($O(n^2+n'^2)$ overhead). Second, it can take into account additional information given by feature weights $\v,\v'$ and provides an interpretable mapping between them across two heterogeneous datasets. Finally, contrary to GW, \COOT\ is not invariant neither to feature rotations nor to the change of signs leading to a more informative samples' coupling when compared to GW in some applications. One such example is given in the previous MNIST-USPS transfer task (Figure~\ref{fig:mnist_usps}) for which the coupling matrix obtained via GW (given in Figure~\ref{fig:gw_coupling_mnist_usps}) exhibits important flaws in respecting class memberships when aligning samples.

\paragraph{Invariant OT and Hierarchical OT} {In \cite{alavarez:2019}, the authors proposed InvOT algorithm that aligns samples and learns a transformation between the features of two data matrices given by a linear map with a bounded Schatten p-norm.  
The authors further showed in \cite[Lemma 4.3]{alavarez:2019} that, under some mild assumptions, InvOT and GW lead to the same samples' couplings when cosine similarity matrices are used. It can be proved that, in this case, \COOT\ is also equivalent to them both (see \hyperlink{appendix}{supplementary materials}). However, note that InvOT is applicable  under the strong assumption that $d=d'$ and provides only linear relations between the features, whereas \COOT\ works when $d\neq d'$ and its feature mappings is sparse and more interpretable. InvOT was further used as a building block for aligning clustered datasets in \cite{Hierarchical_ot_2019} where the authors applied it as a divergence measure between the clusters, thus leading to an approach different from ours. Finally, in \cite{YurochkinCCMS19} the authors proposed a hierarchical OT distance as an OT problem with costs defined based on precomputed Wasserstein distances but with no global features' mapping, contrary to \COOT\ that optimises two couplings of the features and the samples simultaneously.}

\section{Numerical experiments}
\label{sec:expes}

In this section, we {highlight} two possible applications of $\COOT$ in a machine learning context: HDA and co-clustering. {We consider these two particular tasks because 1) OT-based methods are considered as a strong baseline in DA; 2) COOT is a natural match for co-clustering as it allows for soft assignments of data samples and features to co-clusters. }
\label{sec:hda}

\subsection{Heterogeneous domain adaptation}

In classification, domain adaptation problem arises when a model learned using a (source)
domain $\X_s = \{\x_i^s\}_{i=1}^{N_s}$ with associated labels $\Y_s =
\{\y_i^s\}_{i=1}^{N_s}$ is to be deployed on a related target domain
$\X_t = \{\x_i^t\}_{i=1}^{N_t}$ where no or only few labelled data are available. 
{Here, we are interested in the {\em heterogeneous} setting where the source and target data belong to different metric spaces. The most prominent works in HDA are based on Canonical
Correlation Analysis~\cite{yeh2014heterogeneous} {and its kernelized version}
and a more recent { approach based on the} Gromov-Wasserstein
distance~\cite{ijcai2018-412}. 
We investigate here the use of  \COOT\ for  both {\em semi-supervised} HDA, where one has access to a small number $n_t$ of labelled samples per class in the target domain and {\em unsupervised} HDA with $n_t=0$.
}

In order to solve the HDA problem, we compute $\COOT(\X_s,\X_t)$ between the two domains and use the $\GGs$
matrix providing a transport/correspondence between samples {(as illustrated in Figure \ref{fig:mnist_usps})} to estimate the
labels in the target domain via label propagation \cite{redko2018optimal}.
Assuming uniform sample weights and one-hot encoded labels, a class prediction $\hat{\Y}_t$ in the target domain samples can be obtained by {computing} $\hat{\Y}_t = \GGs \Y_s$. {When labelled target samples are available, we further prevent source samples to be mapped to target samples from a different class by adding a high cost in the cost matrix for every such source sample as suggested in [Sec. 4.2]\cite{DBLP:journals/pami/CourtyFTR17}. }
\begin{table}[t]
	\begin{center}

	\resizebox{0.9\columnwidth}{!}{
		\begin{tabular}{ccccccc}
		\toprule
			{Domains} & {No-adaptation baseline} & {CCA} & {KCCA} & {EGW} & {SGW} & {\COOT}\\
			\midrule
			C$\rightarrow$W & $69.12$$\pm 4.82$ & $11.47$$\pm 3.78$ & $66.76$$\pm 4.40$ & $11.35$$\pm 1.93$ & $\underline{ 78.88}$$\pm 3.90$ & $\bf 83.47$$\pm 2.60$\\
			W$\rightarrow$C & $83.00$$\pm 3.95$ & $19.59$$\pm 7.71$ & $76.76$$\pm 4.70$ & $11.00$$\pm 1.05$ & $\underline{ 92.41}$$\pm 2.18$ & $\bf 93.65$$\pm 1.80$\\
			W$\rightarrow$W & $82.18$$\pm 3.63$ & $14.76$$\pm 3.15$ & $78.94$$\pm 3.94$ & $10.18$$\pm 1.64$ & $\underline{ 93.12}$$\pm 3.14$ & $\bf 93.94$$\pm 1.84$\\
			W$\rightarrow$A & $84.29$$\pm 3.35$ & $17.00$$\pm 12.41$ & $78.94$$\pm 6.13$ & $7.24$$\pm 2.78$ & $\underline{ 93.41}$$\pm 2.18$ & $\bf 94.71$$\pm 1.49$\\
			A$\rightarrow$C & $\underline{ 83.71}$$\pm 1.82$ & $15.29$$\pm 3.88$ & $76.35$$\pm 4.07$ & $9.82$$\pm 1.37$ & $80.53$$\pm 6.80$ & $\bf 89.53$$\pm 2.34$\\
			A$\rightarrow$W & $81.88$$\pm 3.69$ & $12.59$$\pm 2.92$ & $81.41$$\pm 3.93$ & $12.65$$\pm 1.21$ & $\underline{ 87.18}$$\pm 5.23$ & $\bf 92.06$$\pm 1.73$\\
			A$\rightarrow$A & $\underline{ 84.18}$$\pm 3.45$ & $13.88$$\pm 2.88$ & $80.65$$\pm 3.03$ & $14.29$$\pm 4.23$ & $82.76$$\pm 6.63$ & $\bf 92.12$$\pm 1.79$\\
			C$\rightarrow$C & $67.47$$\pm 3.72$ & $13.59$$\pm 4.33$ & $60.76$$\pm 4.38$ & $11.71$$\pm 1.91$ & $\underline{ 77.59}$$\pm 4.90$ & $\bf 83.35$$\pm 2.31$\\
			C$\rightarrow$A & $66.18$$\pm 4.47$ & $13.71$$\pm 6.15$ & $63.35$$\pm 4.32$ & $11.82$$\pm 2.58$ & $\underline{ 75.94}$$\pm 5.58$ & $\bf 82.41$$\pm 2.79$\\\midrule
			\bf Mean & $78.00$$\pm 7.43$ & $14.65$$\pm 2.29$ & $73.77$$\pm 7.47$ & $11.12$$\pm 1.86$ & $\underline{ 84.65}$$\pm 6.62$ & $\bf 89.47$$\pm 4.74$\\
			\bf p-value & $<$.001 & $<$.001 & $<$.001 & $<$.001 & $<$.001 & -\\
		\bottomrule
		\end{tabular}
		}
	\end{center}
	\caption{{\bf Semi-supervised HDA} for $n_t=3$ from Decaf to GoogleNet task.}

	\label{tab:table_HDA_D_to_G_ns=3}
\end{table}

\paragraph{Competing methods and experimental settings} 
We evaluate COOT on {\em Amazon} (A), {\em Caltech-256} (C) and {\em Webcam} (W) domains from Caltech-Office dataset~\cite{saenko10} with 10 overlapping classes between the domains and two different deep feature representations obtained for images from each domain using the Decaf ~\cite{donahue14} and GoogleNet~\cite{szegedy2015} neural network architectures. In both cases, we extract the image representations as the activations of the last fully-connected layer, yielding respectively sparse 4096 and 1024 dimensional vectors. The heterogeneity comes from these two very
different representations. We consider 4 baselines: CCA, its kernalized version KCCA~\cite{yeh2014heterogeneous} with a Gaussian kernel which
width parameter is set to the inverse of the dimension of the input vector, EGW representing the entropic version of GW and
SGW~\cite{ijcai2018-412} that incorporates labelled target data into two regularization terms. For EGW and SGW, the entropic regularization term was set to $0.1$, and the two other
regularization hyperparameters for the semi-supervised case to $\lambda=10^{-5}$ and $\gamma=10^{-2}$
as done in \cite{ijcai2018-412,Yuguang_ijcai2017}. We use \COOT\ with entropic regularization
on the feature mapping, with parameter $\epsilon_2=1$ in all experiments.  For
all OT methods, we use label propagation to obtain target labels as the maximum entry of $\hat{\Y}_t$ in each row. For all non-OT methods, classification
was conducted with a k-nn classifier with $k=3$. 
We run the experiment in a semi-supervised setting with $n_t=3$, \ie, $3$ samples per class were labelled in the
target domain. The baseline score is the result of classification by only
considering labelled samples in the target domain as the training set.  For each
pair of domains, we selected $20$ samples per class to form the learning sets.
We run this random selection process 10 times and consider the mean accuracy
of the different runs as a performance measure. In the presented results, we perform adaptation from
Decaf to GoogleNet features, and report the results for $n_t \in \{0,1,3,5\}$ in the opposite direction in the \hyperlink{appendix}{supplementary material}. 
\paragraph{Results}
We first provide in Table~\ref{tab:table_HDA_D_to_G_ns=3} the results for the semi-supervised case. From it, we see that \COOT\ surpasses all the other state-of-the-art methods in terms of mean accuracy. This result is confirmed by a $p$-value lower than $0.001$ on a pairwise method comparison with \COOT\ in a Wilcoxon signed rank test. SGW provides the second best result, while CCA and EGW have a less than average performance. Finally, KCCA performs better than the two latter methods, but still fails most of the time to surpass the {no-adaptation baseline score given by a classifier learned on the available labelled target data}. Results for the unsupervised case can be found in
Table~\ref{tab:table_HDA_D_to_G_ns=0}. This setting is rarely considered in the
literature as unsupervised HDA is regarded as a very difficult problem. In this table, we do not provide scores for the no-adaptation baseline and SGW, as they require labelled data. \begin{wraptable}[12]{r}{.5\linewidth}
\vspace{-3mm}
	\resizebox{.5\columnwidth}{!}{
		\begin{tabular}{ccccc}
		\toprule
			{Domains} & {CCA} & {KCCA} & {EGW} & {\COOT}\\
			\midrule
			C$\rightarrow$W & $14.20$$\pm 8.60$ & $\underline{ 21.30}$$\pm 15.64$ & $10.55$$\pm 1.97$ & $\bf 25.50$$\pm 11.76$\\
			W$\rightarrow$C & $13.35$$\pm 3.70$ & $\underline{ 18.60}$$\pm 9.44$ & $10.60$$\pm 0.94$ & $\bf 35.40$$\pm 14.61$\\
			W$\rightarrow$W & $10.95$$\pm 2.36$ & $\underline{ 13.25}$$\pm 6.34$ & $10.25$$\pm 2.26$ & $\bf 37.10$$\pm 14.57$\\
			W$\rightarrow$A & $14.25$$\pm 8.14$ & $\underline{ 23.00}$$\pm 22.95$ & $9.50$$\pm 2.47$ & $\bf 34.25$$\pm 13.03$\\
			A$\rightarrow$C & $11.40$$\pm 3.23$ & $\underline{ 11.50}$$\pm 9.23$ & $11.35$$\pm 1.38$ & $\bf 17.40$$\pm 8.86$\\
			A$\rightarrow$W & $19.65$$\pm 17.85$ & $\underline{ 28.35}$$\pm 26.13$ & $11.60$$\pm 1.30$ & $\bf 30.95$$\pm 18.19$\\
			A$\rightarrow$A & $11.75$$\pm 1.82$ & $\underline{ 14.20}$$\pm 4.78$ & $13.10$$\pm 2.35$ & $\bf 42.85$$\pm 17.65$\\
			C$\rightarrow$C & $12.00$$\pm 4.69$ & $\underline{ 14.95}$$\pm 6.79$ & $12.90$$\pm 1.46$ & $\bf 42.85$$\pm 18.44$\\
			C$\rightarrow$A & $15.35$$\pm 6.30$ & $\underline{ 23.35}$$\pm 17.61$ & $12.95$$\pm 2.63$ & $\bf 33.25$$\pm 15.93$\\\midrule
			\bf Mean & $13.66$$\pm 2.55$ & $\underline{ 18.72}$$\pm 5.33$ & $11.42$$\pm 1.24$ & $\bf 33.28$$\pm 7.61$\\
			\bf p-value & $<$.001 & $<$.001 & $<$.001 & -\\
		\bottomrule
		\end{tabular}
		}
		\caption{{\bf Unsupervised HDA} for $n_t=0$ from Decaf to GoogleNet task.}
		\label{tab:table_HDA_D_to_G_ns=0}
\end{wraptable}
As one can expect, most of the methods fail in obtaining good
classification accuracies in this setting, despite having access to discriminant 
feature representations. Yet, \COOT\ succeeds in providing a meaningful mapping
in some cases. {The overall superior performance of \COOT\ highlights its strengths and underlines the limits of other HDA methods. First, \COOT\ does not depend on approximating empirical quantities from the data, contrary to CCA and KCCA that rely on the estimation of the cross-covariance matrix that is known to be flawed for high-dimensional data with few samples \cite{SongSRH16}. Second, \COOT\ takes into account the features of the raw data that are more informative than the pairwise distances used in EGW. Finally, \COOT\ avoids the sign invariance issue discussed previously that hinders GW's capability to recover classes without supervision as illustrated for the MNIST-USPS problem before.}

\subsection{Co-clustering and data summarization}
While traditional clustering methods present an important discovery tool for data analysis, they discard the relationships that may exist between the features that describe the data samples. \iev{For instance, in recommendation systems, where each user is described in terms of his or her preferences for some product, clustering algorithms may benefit from the knowledge about the correlation between different products revealing their probability of being recommended to the same users.}{} This idea is the cornerstone of \textit{co-clustering} \cite{hartigan-direct-clustering-data-1972} where given a data matrix $\X \in \mathbb{R}^{n\times d}$ and  the number of samples (rows) and features (columns) clusters denoted by $g\leq n$ and $m\leq d$, respectively, we seek to find $\X_c \in \mathbb{R}^{g\times m}$ that summarizes $\X$ in the best way possible. 

\paragraph{\COOT-clustering}
We look for $\X_c$ which is as close as possible to the original $\X$ \textit{w.r.t} \COOT\ by solving $\min_{\X_{c}} \COOT(\X,\X_{c}) =\min_{\GGs,\GGv, \X_{c}} \langle \L(\X,\X_{c}) \otimes \GGs, \GGv \rangle$ with entropic regularization.
More precisely, we set $\w,\w',\v,\v'$ as uniform, initialize $\X_{c}$ with random values and apply the BCD algorithm over ($\GGs,\GGv,\X_{c}$) by alternating between the following steps: 1) obtain
$\GGs$ and $\GGv$ by solving $\COOT(\X,\X_{c})$; 2) set $\X_{c}$ to  $gm\GG^{s\top} \X \GGv$.
{This second step of the procedure is a least-square
estimation when $L=|\cdot|^2$ and corresponds to minimizing the \COOT\ objective
\emph{w.r.t.} $\X_c$. In practice, we observed
that few iterations of this procedure are enough to ensure the convergence. Once solved, we use the soft assignments provided by coupling matrices $\GGs \in \mathbb{R}^{n\times g},\GGv \in \mathbb{R}^{d\times m}$ to partition data points and features to clusters by taking the index of the maximum element in each row of $\GGs$ and $\GGv$, respectively.

\paragraph{Simulated data}
We follow \cite{LaclauRMBB17} where four scenarios with different number of co-clusters, degrees of separation and sizes were considered (for details, see the \hyperlink{appendix}{supplementary materials}). {We choose to evaluate \COOT\ on simulated data as it provides us with the ground-truth for feature clusters that are often unavailable for real-world data sets.} As in \cite{LaclauRMBB17}, we use the same co-clustering baselines including
ITCC \cite{Dhillon:2003:IC:956750.956764}, Double K-Means (DKM)~\cite{rocci_08},
Orthogonal Nonnegative Matrix Tri-Factorizations (ONTMF) \cite{ding_06}, the
Gaussian Latent Block Models (GLBM) \cite{NADI08CI} and Residual Bayesian
Co-Clustering (RBC) \cite{shan_10} as well as the K-means and NMF run on both
modes of the data matrix, as clustering baseline. The performance of all methods
is measured using the co-clustering error (CCE) \cite{Patrikainen06}. 
For all configurations,
we generate 100 data sets and present the mean and standard deviation of the CCE
over all sets for all baselines in Table \ref{tab:result}. Based on these
results, we see that our algorithm outperforms all the other baselines on D1, D2
and D4 data sets, while being behind {\CCOTGW} proposed by \cite{LaclauRMBB17} on
D3. This result is rather strong as our method relies on the original data
matrix, while {\CCOTGW} relies on its kernel
representation and thus benefits from the non-linear information captured by
it. Finally, we note that while both competing methods rely on OT, they remain very different as {\CCOTGW} approach is based on detecting the positions and the number of jumps in the scaling vectors of GW entropic regularized solution, while our method relies on coupling matrices to obtain the partitions.


\paragraph{Olivetti Face dataset} 
As a first application of \COOT\ for the
co-clustering problem on real data, we propose to run the algorithm on the well
known Olivetti faces dataset \cite{samaria1994parameterisation}. 

\begin{figure}
  \centering
  \vspace{-4mm}
  \includegraphics[width=0.8\linewidth]{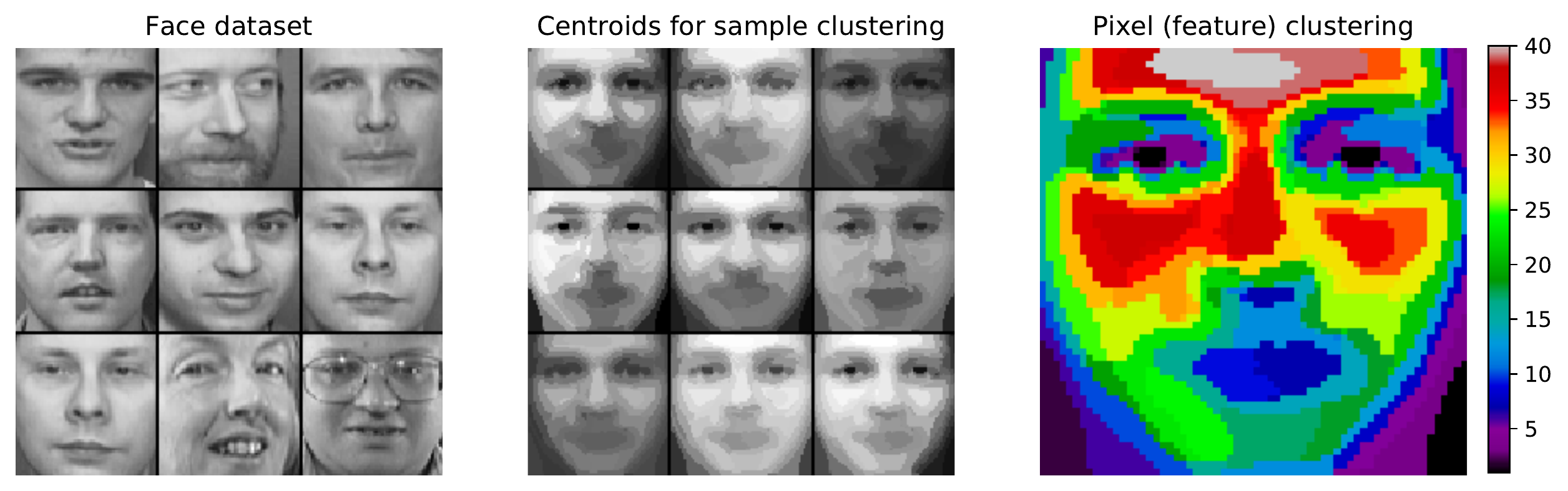}
  \caption{Co-clustering with \COOT\ on the Olivetti faces dataset. \textbf{(left)} Example images from the dataset, \textbf{(center)} centroids estimated by \COOT\, \textbf{(right)} clustering of the pixels estimated by \COOT\ where each color represents a cluster.
  }
  \label{fig:coclust_faces}
\end{figure}

We take 400
images normalized between 0 and 1 and run our algorithm with $g=9$ image
clusters and $m=40$ feature (pixel) clusters. {As before, we consider the empirical distributions supported on images and features, respectively.}  The resulting reconstructed image's
clusters are given in Figure
\ref{fig:coclust_faces} and the pixel clusters are illustrated in its rightmost part. We can see that despite the high variability in the data set, we still
manage to recover detailed centroids, {whereas} L2-based clustering {such as standard NMF or k-means based on $\ell_2$ norm cost function are known to
provide blurry estimates in this case. Finally, as in the MNIST-USPS example, \COOT\ recovers spatially
localized pixel clusters with no prior information about the pixel relations.
\begin{table*}[!t]
\begin{center}
\resizebox{\linewidth}{!}{
\begin{tabular}{lcccccccccc}
\hline
\multirow{2}{*}{Data set} &\multicolumn{10}{c}{Algorithms}\\
\cline{2-11}
  & K-means&NMF&DKM&Tri-NMF&GLBM&ITCC&RBC&\CCOT&\CCOTGW & COOT\\
\hline
D1&$.018\pm{.003}$&$.042\pm{.037}$&$.025\pm{.048}$&$.082\pm{.063}$&$.021\pm{.011}$&$.021\pm{.001}$&$.017\pm{.045}$&$.018\pm{.013}$&$.004\pm{.002}$ & $\mathbf{0}$ \\
D2&$.072\pm{.044}$&$.083\pm{.063}$&$.038\pm{.000}$&$.052\pm{.065}$&$.032\pm{.041}$&$.047\pm{.042}$&$.039\pm{.052}$&$.023\pm{.036}$&$.011\pm{.056}$ & $\mathbf{.009\pm{0.04}}$\\
D3&--&--&$.310\pm{.000}$&--&$.262\pm{.022}$&$.241\pm{.031}$&--&$.031\pm{.027}$&$\mathbf{.008\pm{.001}}$ & $.04\pm{.05}$\\
D4&$.126\pm{.038}$&--&$.145\pm{.082}$&--&$.115\pm{.047}$&$.121\pm{.075}$&$.102\pm{.071}$&$.093\pm{.032}$&$.079\pm{.031}$ & $\mathbf{0.068\pm{0.04}}$\\
\hline
\end{tabular}
}
\end{center}
\caption{\label{tab:result} Mean ($\pm$ standard-deviation) of the co-clustering error (CCE) obtained for all configurations. ``-" indicates that the algorithm cannot find a partition with the requested number of co-clusters. All the baselines results (first 9 columns) are from \cite{LaclauRMBB17}.
}
\end{table*}

\paragraph{MovieLens} We now evaluate our approach on the benchmark \MovieL-100K\footnote{https://grouplens.org/datasets/movielens/100k/} data set that provides 100,000 user-movie ratings, on a scale of one to five, collected from 943 users on 1682 movies. The main goal of our algorithm here is to summarize the initial data matrix so that $\X_c$ reveals the blocks (co-clusters) of movies and users that share similar tastes. We set the number of user and film clusters to $g=10$ and $m=20$, respectively as in \cite{Banerjee:2007:GME:1314498.1314563}. 

\begin{table}[t!]
\centering
\resizebox{0.67\linewidth}{!}{
\begin{tabular}{cc}
\hline
M1 &M20\\
\hline
Shawshank Redemption (1994)& Police Story 4: Project S (Chao ji ji hua) (1993)\\
Schindler's List (1993) & Eye of Vichy, The (Oeil de Vichy, L') (1993) \\
Casablanca (1942) & Promise, The (Versprechen, Das) (1994)\\
Rear Window (1954) & To Cross the Rubicon (1991)\\
Usual Suspects, The (1995) & Daens (1992)\\
\hline
\end{tabular}
}
\vspace{3mm}
\caption{\label{tab:top5} Top 5 of movies in clusters M1 and M20. Average rating of the top 5 rated movies in M1 is 4.42, while for the M20 it is 1.}
\end{table}
The obtained results provide the first movie cluster consisting of
films with high ratings (3.92 on average), while the last movie cluster includes
movies with very low ratings (1.92 on average). Among those, we show the 5
best/worst rated movies in those two clusters in Table \ref{tab:top5}. Overall, our algorithm manages to find a coherent co-clustering structure in \MovieL-100K and obtains results similar to those provided in \cite{LaclauRMBB17,Banerjee:2007:GME:1314498.1314563}.


\section{Discussion and conclusion}

\label{sec:conclu}

In this paper, we presented a novel optimal transport problem which aims at comparing 
distributions supported on samples belonging to different spaces. To this end, two optimal transport maps, one acting on the sample space, 
and the other on the feature space, are optimized to connect the two heterogeneous distributions. We provide several algorithms allowing to solve it in general and special cases and show its connections to other OT-based problems. We further demonstrate its
usefulness and versatility on two difficult machine learning problems: heterogeneous domain adaptation and 
co-clustering/data summarization, where promising results were obtained. Numerous follow-ups of this work are expected. Beyond the potential applications of the method in various contexts, such as {\em e.g.} statistical matching, data analysis or even losses in deep learning settings, one immediate and intriguing question lies into the generalization of this framework in the continuous setting, and  the potential connections to duality theory. This might lead to stochastic optimization schemes  enabling large scale solvers for this problem. 

\subsection*{Acknowledgements}

We thank L\'eo Gautheron, Guillaume Metzler and Raphaël Chevasson for proofreading the manuscript before the submission. This work benefited from the support from  OATMIL ANR-17-CE23-0012 project of the French National Research Agency (ANR). This work has been supported by the French government, through the 3IA Côte d’Azur Investments in the Future project managed by the National Research Agency (ANR) with the reference number ANR-19-P3IA-0002.
This action benefited from the support of the Chair "Challenging Technology for Responsible Energy" led by l'X – Ecole polytechnique and the Fondation de l’Ecole polytechnique, sponsored by TOTAL.
 We gratefully acknowledge the support of NVIDIA Corporation with the donation of the Titan X GPU used for this research.

\section*{Broader impact}
Despite its evident usefulness the problem of finding the correspondences between two datasets is rather general and may arise in many fields in machine learning. Consequently it is quite difficult to exhaustively state all the potential negative ethical impacts that may occur when using our method. As described in the paper, it could be used to solve the so-called election isomorphism problem \cite{faliszewski19} where one wants to find how similar are two elections based on the knowledge of votes and candidates. Although having these type of datasets seems unrealistic in modern democracies, using our approach on this problem runs the risk of breaking some privacy standards by revealing precisely how the votes have been moved from one election to the other. Generally speaking, and when given access to two datasets with sensitive data, our method is able to infer correspondences between instances \emph{and} features which could possibly lead to privacy issues for a malicious user. From a different perspective, the Optimal Transport framework is known to be quite computationally expensive and even recent improvements turns out to be super-linear in terms of the computational complexity. It is not an energy-free tool and in a time when carbon footprints must be drastically reduced, one should have in mind the potential negative impact that computationally demanding algorithms might have on the planet.






\newpage

\Large{\hypertarget{appendix}{\textbf{Supplementary materials}}}%

\normalsize
\setcounter{section}{0}
\setcounter{prop}{0}
\def\thesection{\Alph{section}}

The supplementary is organized as follows. After the MNIST-USPS illustration (Section 2 of the main paper), Section \ref{sec:properties} presents the proof of Proposition \ref{prop:distance} from the main paper and the computational complexity of calculating the value of the \COOT\ problem as mentioned in Section 2.3 of the main paper.  We provide the proofs for the equivalence of \COOT\ to Gromov-Wassserstein distance (Propositions \ref{concavity_gw_theo} and \ref{prop:cot_equal_gw} from the main paper and algorithmic implications discussed after Proposition 3), InvOT and election isomorphism problem in Section \ref{sec:proofs3}. Finally, in Section \ref{sec:expes_supp}, we provide additional experimental results for heterogeneous domain adaptation problem and precise the simulation details for the co-clustering task. 

\section{Illustration on MNIST-USPS task}
We provide a comparison between the coupling matrices obtained using GW and \COOT\ on the MNIST-USPS problem from Section 2 of the main paper in Figure \ref{fig:cooot_vs_gw} and show the results of transporting the USPS samples to MNIST and vice versa using barycentric mapping in Figures \ref{fig:usps_to_mnist_piv} and \ref{fig:mnist_to_usps_piv}.
\begin{figure*}[!h]
	\centering
	\includegraphics[width=0.9\linewidth]{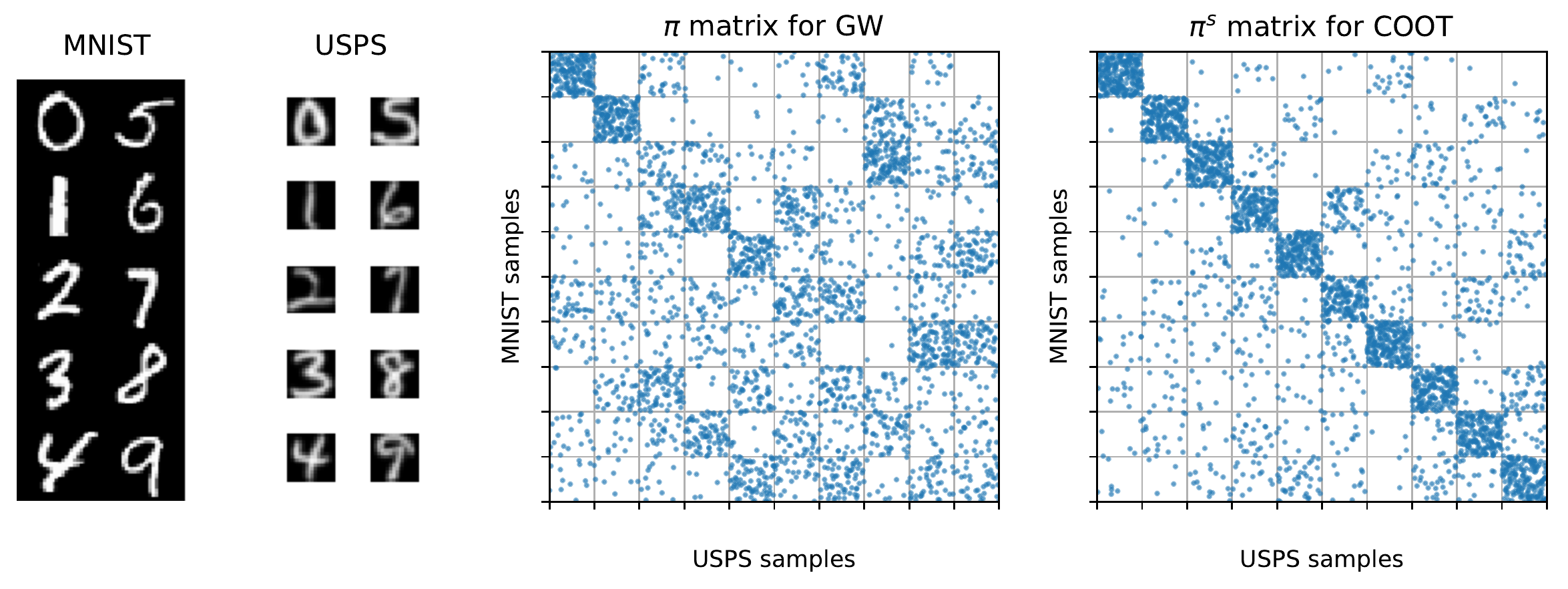}
	\caption{Comparison between the coupling matrices obtained via GW and COOT on MNIST-USPS.}
	\label{fig:cooot_vs_gw}
\end{figure*}
\begin{figure*}[!h]
    \centering
    \includegraphics[width=.9\linewidth]{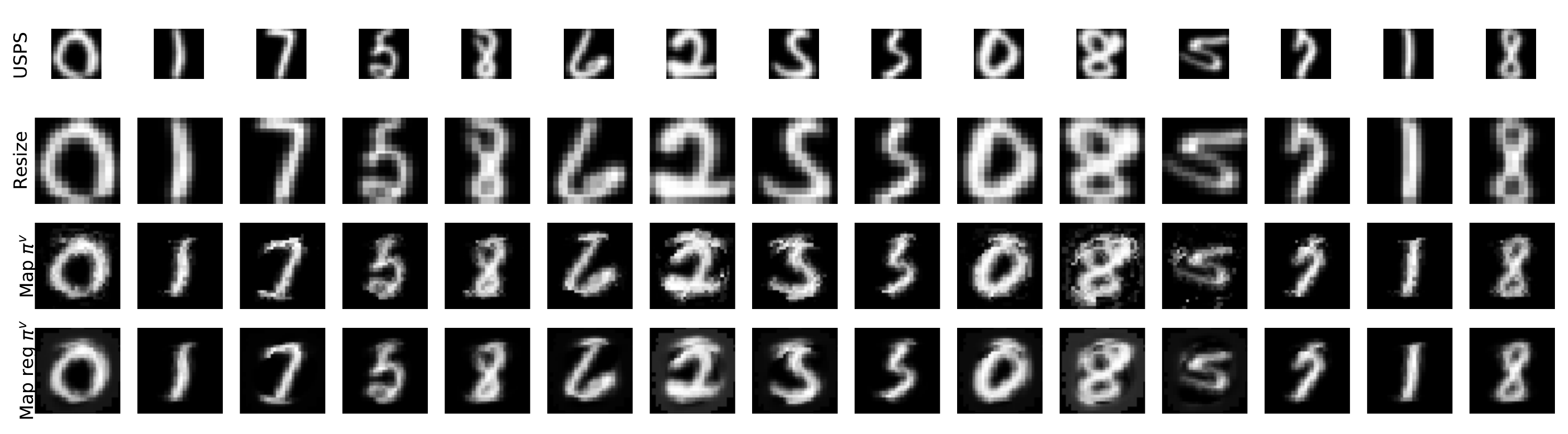}
    \caption{Linear mapping from USPS to MNIST using $\GGv$.  \textbf{(First row)} Original USPS samples,
    \textbf{(Second row)} Samples resized to target resolution, \textbf{(Third row)} Samples mapped using $\GGv$, \textbf{(Fourth row)} Samples mapped using $\GGv$ with entropic regularization.}
    \label{fig:usps_to_mnist_piv}
\end{figure*}
\begin{figure*}[!h]
    \centering
    \includegraphics[width=.9\linewidth]{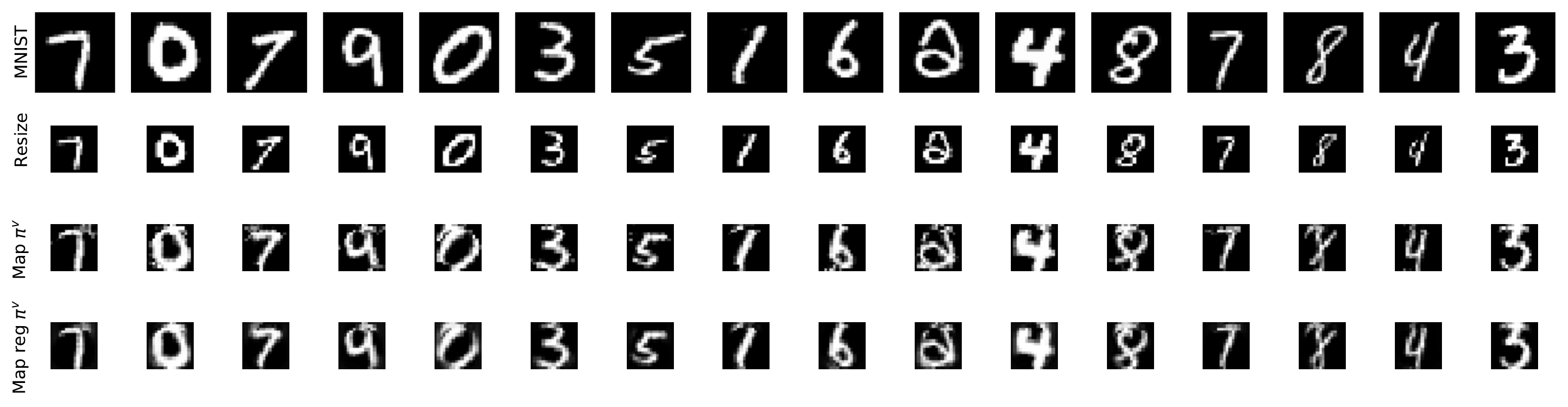}
    \caption{Linear mapping from MNIST to USPS using $\GGv$.  \textbf{(First row)} Original MNIST samples,
    \textbf{(Second row)} Samples resized to target resolution, \textbf{(Third row)} Samples mapped using $\GGv$, \textbf{(Fourth row)} Samples mapped using $\GGv$ with entropic regularization.}
    \label{fig:mnist_to_usps_piv}
\end{figure*}

\section{Proofs from Section 2}
\label{sec:properties}
\subsection{Proof of Proposition 1}
\begin{prop}[\COOT\ is a distance {for $n=n',d=d'$}]
\label{prop:distance}
Suppose $L=|\cdot|^{p}, p \geq 1$, $n=n',d=d'$ and that the weights $\w,\w',\v,\v'$ are uniform. Then $\COOT(\X,\X')=0$ \textit{iff} there exists a permutation of the samples $\sigma_{1} \in \mathbb{S}_{n}$ and of the features $\sigma_{2} \in \mathbb{S}_{d}$, \textit{s.t}, $\forall i,k \ \X_{i,k}=\X'_{\sigma_{1}(i),\sigma_{2}(k)}$. 
Moreover, in general for $n\neq n'$, $d\neq d'$ and potentially non uniform weights, it is symmetric and satisfies the triangle inequality as long as $L$ satisfies the triangle inequality $\COOT(\X,\X'')\leq \COOT(\X,\X')+\COOT(\X',\X'')$. 
\end{prop}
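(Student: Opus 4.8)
The plan is to treat the three assertions separately—nonnegativity together with the identity of indiscernibles, symmetry, and the triangle inequality—mirroring the classical argument for the Gromov--Wasserstein distance \cite{memoli_gw}. First, since $L$ takes values in $\mathbb{R}_+$ and every feasible $(\GGs,\GGv)$ has nonnegative entries, the objective of \eqref{eq:co-optimal-transport} is always $\geq 0$; moreover $\Pi(\w,\w')\times\Pi(\v,\v')$ is compact and the bilinear objective is continuous, so a minimizer exists. For the easy ($\Leftarrow$) direction of the equality case, given permutations $\sigma_1\in\mathbb{S}_n$, $\sigma_2\in\mathbb{S}_d$ with $X_{i,k}=X'_{\sigma_1(i),\sigma_2(k)}$, I would evaluate \eqref{eq:co-optimal-transport} at the scaled permutation matrices $\GGs=\tfrac1n\mathbf{P}_{\sigma_1}$ and $\GGv=\tfrac1d\mathbf{P}_{\sigma_2}$, which are feasible for uniform marginals; the objective collapses to $\tfrac1{nd}\sum_{i,k}L(X_{i,k},X'_{\sigma_1(i),\sigma_2(k)})=\tfrac1{nd}\sum_{i,k}\lvert X_{i,k}-X_{i,k}\rvert^p=0$, hence $\COOT(\X,\X')=0$.

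For the ($\Rightarrow$) direction I would exploit the bilinear-program structure recalled after \eqref{eq:co-optimal-transport-reg}: an indefinite BP over a polytope attains its optimum at a vertex of the feasible set \cite{gallo:1977,pardalos:1987,horst1996global}, and the vertices of $\Pi(\w,\w')\times\Pi(\v,\v')$ are products of vertices of the two factors, which under uniform marginals are exactly the scaled permutation matrices by Birkhoff's theorem \cite{birkhoff:1946}. Hence some optimal pair has the form $(\tfrac1n\mathbf{P}_{\sigma_1},\tfrac1d\mathbf{P}_{\sigma_2})$, so that $0=\COOT(\X,\X')=\tfrac1{nd}\sum_{i,k}L(X_{i,k},X'_{\sigma_1(i),\sigma_2(k)})$; since every summand is nonnegative, each vanishes, i.e. $X_{i,k}=X'_{\sigma_1(i),\sigma_2(k)}$ for all $i,k$. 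I expect this to be the only delicate step, and it reduces to the extremal-point property of bilinear programs combined with Birkhoff's theorem.

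Symmetry is immediate once $L$ is symmetric (as is $L=\lvert\cdot\rvert^p$): transposing $(\GGs,\GGv)\mapsto(\GGs^\top,\GGv^\top)$ is a bijection between the feasible sets of $\COOT(\X,\X',\w,\w',\v,\v')$ and $\COOT(\X',\X,\w',\w,\v',\v)$ preserving the objective value. For the triangle inequality I would use a gluing construction: pick optimal $(\GGs_1,\GGv_1)$ for $\COOT(\X,\X')$ and $(\GGs_2,\GGv_2)$ for $\COOT(\X',\X'')$, and assuming first that $\w'$ and $\v'$ have strictly positive entries, set $\GGs=\GGs_1\,\mathrm{diag}(\w')^{-1}\GGs_2$ and $\GGv=\GGv_1\,\mathrm{diag}(\v')^{-1}\GGv_2$. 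Checking the row and column sums gives $\GGs\in\Pi(\w,\w'')$ and $\GGv\in\Pi(\v,\v'')$, so this pair is feasible for $\COOT(\X,\X'')$. Writing $\GGs_{i,j}=\sum_e (\GGs_1)_{i,e}(\GGs_2)_{e,j}/w'_e$ and $\GGv_{k,l}=\sum_f (\GGv_1)_{k,f}(\GGv_2)_{f,l}/v'_f$, applying $L(X_{i,k},X''_{j,l})\leq L(X_{i,k},X'_{e,f})+L(X'_{e,f},X''_{j,l})$ termwise, and then summing out the indices that are ``free'' in each piece (summing over $j,l$ turns $\GGs_2,\GGv_2$ back into their marginals and leaves $\langle\L(\X,\X')\otimes\GGs_1,\GGv_1\rangle$; summing over $i,k$ leaves $\langle\L(\X',\X'')\otimes\GGs_2,\GGv_2\rangle$) yields $\COOT(\X,\X'')\leq\COOT(\X,\X')+\COOT(\X',\X'')$. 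Zero entries in $\w'$ or $\v'$ are handled by replacing $\mathrm{diag}(\w')^{-1}$ with its pseudo-inverse (zero on null coordinates), which is harmless since the corresponding columns of $\GGs_1$ and rows of $\GGs_2$ vanish as well.

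Finally, I would add that when $L=\lvert\cdot\rvert^p$ with $p>1$ the raw $\COOT$ need not be subadditive, but $\COOT^{1/p}$ is: running the same gluing while applying Minkowski's inequality to the probability measure $(i,j,k,l,e,f)\mapsto (\GGs_1)_{i,e}(\GGs_2)_{e,j}(\GGv_1)_{k,f}(\GGv_2)_{f,l}/(w'_e v'_f)$ gives $\COOT(\X,\X'')^{1/p}\leq\COOT(\X,\X')^{1/p}+\COOT(\X',\X'')^{1/p}$. Apart from the vertex argument in the second paragraph, every step above is routine bookkeeping with the transport marginal constraints, so the vertex/Birkhoff argument is the only place where care is genuinely needed.
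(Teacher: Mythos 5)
Your proof is correct and follows essentially the same route as the paper's: scaled permutation couplings for the "if" direction, the extremal-point/Birkhoff argument for the "only if" direction, and the same gluing construction $\GGs_1\,\mathrm{diag}(\w')^{-1}\GGs_2$, $\GGv_1\,\mathrm{diag}(\v')^{-1}\GGv_2$ with the termwise triangle inequality on $L$ for subadditivity. The only differences are cosmetic: the paper handles zero weights by modifying them rather than using a pseudo-inverse, and your closing Minkowski remark on $\COOT^{1/p}$ is a correct but extra observation not needed for the stated result.
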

\begin{proof}
The symmetry follows from the definition of \COOT. To prove the triangle inequality of \COOT\ for arbitrary measures, we will use the gluing lemma (see \cite{Villani}) which states the existence of couplings with a prescribed structure. 
Let $\X \in\mathbb{R}^{n\times d},\X' \in\mathbb{R}^{n'\times d'},\X'' \in\mathbb{R}^{n''\times d''}$ associated with $\w \in \Delta_n,\v \in \Delta_d,\w' \in \Delta_n',\v' \in \Delta_d',\w'' \in \Delta_n'',\v'' \in \Delta_d''$. Without loss of generality, we can suppose in the proof that all weights are different from zeros (otherwise we can consider $\tilde{w}_{i}=w_{i}$ if $w_{i}>0$ and $\tilde{w}_{i}=1$ if $w_{i}=0$ see proof of Proposition 2.2 in \cite{cot_peyre_cutu})

Let $(\GGs_{1},\GGv_{1})$ and $(\GGs_{2},\GGv_{2})$ be two couples of optimal solutions for the $\COOT$ problems associated with $\COOT(\X,\X',\w,\w',\v,\v')$ and $\COOT(\X',\X'',\w',\w'',\v',\v'')$ respectively.

We define:
\begin{equation*}
S_{1}=\GGs_{1}\text{diag}\left(\frac{1}{\w'}\right)\GGs_{2}, \quad
S_{2}=\GGv_{1}\text{diag}\left(\frac{1}{\v'}\right)\GGv_{2}
\end{equation*}

Then, it is easy to check that $S_{1} \in \Pi(\w,\w'')$ and $S_{2} \in \Pi(\v,\v'')$ (see \textit{e.g} Proposition 2.2 in \cite{cot_peyre_cutu}). We now show the following:
\begin{equation*}
\begin{split}
&\COOT(\X,\X'',\w,\w'',\v,\v'') \stackrel{*}{\leq} \langle \mathbf{L}(\X,\X'')\otimes S_{1}, S_{2} \rangle = \langle \mathbf{L}(\X,\X'')\otimes [\GGs_{1}\text{diag}(\frac{1}{\w'})\GGs_{2}], [\GGv_{1}\text{diag}(\frac{1}{\v'})\GGv_{2}] \rangle \\
&\stackrel{**}{\leq} \langle [\mathbf{L}(\X,\X')+\mathbf{L}(\X',\X'')]\otimes [\GGs_{1}\text{diag}(\frac{1}{\w'})\GGs_{2}], [\GGv_{1}\text{diag}(\frac{1}{\v'})\GGv_{2}] \rangle \\
&=\langle \mathbf{L}(\X,\X')\otimes [\GGs_{1}\text{diag}(\frac{1}{\w'})\GGs_{2}], [\GGv_{1}\text{diag}(\frac{1}{\v'})\GGv_{2}] \rangle +\langle \mathbf{L}(\X',\X'')\otimes [\GGs_{1}\text{diag}(\frac{1}{\w'})\GGs_{2}], [\GGv_{1}\text{diag}(\frac{1}{\v'})\GGv_{2}] \rangle,
\end{split}
\end{equation*}
\noindent where in (*) we used the suboptimality of $S_{1},S_{2}$ and in (**) the fact that $L$ satisfies the triangle inequality.

Now note that:
\begin{equation*}
\begin{split}
&\langle \mathbf{L}(\X,\X')\otimes [\GGs_{1}\text{diag}(\frac{1}{\w'})\GGs_{2}], [\GGv_{1}\text{diag}(\frac{1}{\v'})\GGv_{2}] \rangle +\langle \mathbf{L}(\X',\X'')\otimes [\GGs_{1}\text{diag}(\frac{1}{\w'})\GGs_{2}], [\GGv_{1}\text{diag}(\frac{1}{\v'})\GGv_{2}] \rangle\\
&= \sum_{i,j,k,l,e,o} L(X_{i,k},X'_{e,o}) \frac{\GGs_{1}{_{i,e}} \GGs_{2}{_{e,j}}}{w'_{e}} \frac{\GGv_{1}{_{k,o}} \GGv_{2}{_{o,l}}}{v'_{o}} +\sum_{i,j,k,l,e,o} L(X'_{e,o},X''_{j,l}) \frac{\GGs_{1}{_{i,e}} \GGs_{2}{_{e,j}}}{w'_{e}} \frac{\GGv_{1}{_{k,o}} \GGv_{2}{_{o,l}}}{v'_{o}} \\
&\stackrel{*}{=} \sum_{i,k,e,o} L(X_{i,k},X'_{e,o})\GGs_{1}{_{i,e}} \GGv_{1}{_{k,o}} +\sum_{l,j,e,o} L(X'_{e,o},X''_{j,l}) \GGs_{2}{_{e,j}} \GGv_{2}{_{o,l}}
\end{split}
\end{equation*}
where in (*) we used:
\begin{equation*}
\sum_{j} \frac{\GGs_{2}{_{e,j}}}{w'_{e}}=1, \ \sum_{l} \frac{\GGv_{2}{_{o,l}}}{v'_{o}}=1, \ \sum_{i} \frac{\GGs_{1}{_{i,e}}}{w'_{e}}=1, \ \sum_{k} \frac{\GGv_{1}{_{k,o}}}{v'_{o}}=1
\end{equation*}
Overall, from the definition of $\GGs_{1},\GGv_{1}$ and $\GGs_{2},\GGv_{2}$ we have:
\begin{equation*}
\begin{split}
&\COOT(\X,\X'',\w,\w'',\v,\v'')\leq  \COOT(\X,\X',\w,\w',\v,\v')+\COOT(\X',\X'',\w',\w'',\v',\v'').
\end{split}
\end{equation*}

For the identity of indiscernibles, suppose that $n=n',d=d'$ and that the weights $\w,\w',\v,\v'$ are uniform. Suppose that there exists a permutation of the samples $\sigma_{1} \in \mathbb{S}_{n}$ and of the features $\sigma_{2} \in \mathbb{S}_{d}$, \textit{s.t} $\forall i,k \in [\![n]\!]\times[\![d]\!], \ \X_{i,k}=\X'_{\sigma_{1}(i),\sigma_{2}(k)}$. We define the couplings $\pi^{s},\pi^{v}$ supported on the graphs of the permutations $\sigma_1,\sigma_2$ respectively, \textit{i.e} $\pi^{s}=(Id \times \sigma_{1})$ and $\pi^{v}=(Id \times \sigma_{2})$. These couplings have the prescribed marginals and lead to a zero cost hence are optimal.

Conversely, as described in the paper, there always exists an optimal solution of \eqref{eq:co-optimal-transport} which lies on extremal points of the polytopes $\Pi(\w,\w')$ and $\Pi(\v,\v')$. When $n=n',d=d'$ and uniform weights are used, Birkhoff’s theorem \cite{birkhoff:1946} states that the set of extremal points of $\Pi(\frac{\one_{n}}{n},\frac{\one_{n}}{n})$ and $\Pi(\frac{\one_{d}}{d},\frac{\one_{d}}{d})$ are the set of permutation matrices so there exists an optimal solution $(\GGs_{*},\GGv_{*})$ supported on $\sigma_{*}^{s},\sigma_{*}^{v}$ respectively with $\sigma_{*}^{s},\sigma_{*}^{v} \in \mathbb{S}_{n} \times \mathbb{S}_{d}$. 
Then, if $\COOT(\X,\X')=0$, it implies that $\sum_{i,k} L(X_{i,k},X'_{\sigma_{*}^{s}(i),\sigma_{*}^{v}(k)})=0$. If $L=|\cdot|^{p}$ then $X_{i,k}=X'_{\sigma_{*}^{s}(i),\sigma_{*}^{v}(k)}$ which gives the desired result. If $n\neq n',d\neq d'$ the \COOT\ cost is always strictly positive as there exists a strictly positive element outside the diagonal.

\end{proof}

\subsection{Complexity of computing the value of \COOT}
\label{sec:complexity}
As mentionned in \cite{peyre2016gromov}, if $L$ can be written as $L(a,b)=f(a)+f(b)-h_{1}(a)h_{2}(b)$ then we have that
$$\mathbf{L}(\X,\X')\otimes \GGs=\mathbf{C}_{\X,\X'}-h_{1}(\X) \GGs h_{2}(\X')^{T},$$
where $\mathbf{C}_{\X,\X'}=\X \w \mathbbm{1}_{n'}^{T}+\mathbbm{1}_{n} \w'^{T}\X'^{T}$ so that the latter can be computed in $O(ndd'+n'dd')=O((n+n')dd')$. To compute the final cost, we must also calculate the scalar product with $\GGv$ that can be done in $O(n'^{2}n)$ making the complexity of $\langle \mathbf{L}(\X,\X')\otimes \GGs, \GGv \rangle$ equal to $O((n+n')dd'+n'^{2}n)$. 

Finally, as the cost is symmetric \textit{w.r.t } $\GGs,\GGv$, we obtain the overall complexity of $O(\min\{(n+n')dd'+n'^{2}n;(d+d')nn'+d'^{2}d\})$.

\section{Proofs from Section 3}
\label{sec:proofs3}
\subsection{Equivalence between BAP and QAP}
As pointed in \cite{Konno1976}, we can relate the solutions of a QAP and a BAP using the following theorem:
\begin{theo}
\label{equivalence_theo}
If $\mathbf{Q}$ is a positive semi-definite matrix, then problems: 
\begin{equation}
\label{qap2}
\begin{array}{cl}{\max _{\xbf} f(\xbf)} & {=\mathbf{c}^{T} \xbf+\frac{1}{2} \xbf^{T} \mathbf{Q} \xbf} \\ {\text {s.t.}} & {\mathbf{A} \xbf = \mathbf{b}},\;  {\xbf \geq 0}
\end{array}
\end{equation}
\begin{equation}
\label{bilinearqap2}
\begin{array}{cl}{\max _{\xbf, \ybf} g(\xbf, \ybf)} & {=\frac{1}{2}\mathbf{c}^{T} \xbf+\frac{1}{2} \mathbf{c}^{T}\ybf+\frac{1}{2} \xbf^{T} \mathbf{Q} \ybf} \\ {\text {s.t.}} & {\mathbf{A} \xbf = \mathbf{b}, \mathbf{A} \ybf =\mathbf{b}},\;   {\xbf, \ybf \geq 0}
\end{array}
\end{equation}
are equivalent. More precisely, if $\xbf^{*}$ is an optimal solution for \eqref{qap2}, then $(\xbf^{*},\xbf^{*})$ is a solution for \eqref{bilinearqap2} and if $(\xbf^{*},\ybf^{*})$ is optimal for \eqref{bilinearqap2}, then both $\xbf^{*}$ and $\ybf^{*}$ are optimal for \eqref{qap2}.
\end{theo}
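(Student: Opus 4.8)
The plan is to prove the value-equality of \eqref{qap2} and \eqref{bilinearqap2} first, and then extract the optimizer correspondence from the equality cases in the inequalities used. Write $P=\{\xbf:\mathbf{A}\xbf=\mathbf{b},\ \xbf\geq 0\}$ for the common feasible polytope and note the elementary identity $g(\xbf,\xbf)=f(\xbf)$, so that $\xbf\mapsto(\xbf,\xbf)$ maps feasible points of \eqref{qap2} to feasible points of \eqref{bilinearqap2} without changing the objective. This already gives $\max_{\xbf,\ybf} g\geq \max_{\xbf} f$.

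For the reverse inequality, the one computation that matters is the quadratic identity
\[
f(\xbf)+f(\ybf)-2\,g(\xbf,\ybf)=\tfrac12\,(\xbf-\ybf)^{T}\mathbf{Q}(\xbf-\ybf),
\]
which is valid for all $\xbf,\ybf$ (using symmetry of $\mathbf{Q}$) and nonnegative precisely because $\mathbf{Q}$ is positive semi-definite. Rearranging, $g(\xbf,\ybf)\leq \tfrac12 f(\xbf)+\tfrac12 f(\ybf)\leq \max_{\xbf} f$ for every feasible pair, hence $\max g\leq \max f$. Combining the two bounds yields equality of the optimal values of \eqref{qap2} and \eqref{bilinearqap2}.

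The optimizer correspondence then follows by tracking when these inequalities are tight. If $\xbf^{*}$ is optimal for \eqref{qap2}, then $g(\xbf^{*},\xbf^{*})=f(\xbf^{*})=\max f=\max g$, so $(\xbf^{*},\xbf^{*})$ is optimal for \eqref{bilinearqap2}. Conversely, if $(\xbf^{*},\ybf^{*})$ is optimal for \eqref{bilinearqap2}, the chain $\max g=g(\xbf^{*},\ybf^{*})\leq \tfrac12 f(\xbf^{*})+\tfrac12 f(\ybf^{*})\leq \max f=\max g$ must be an equality throughout; since $f(\xbf^{*})\leq\max f$ and $f(\ybf^{*})\leq\max f$, their average can equal $\max f$ only if both attain it, so $\xbf^{*}$ and $\ybf^{*}$ are each optimal for \eqref{qap2}.

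I do not expect a real obstacle: the whole argument hinges on the single PSD identity above, and the rest is bookkeeping on equality cases. The only point worth phrasing with care is the remark connecting this to the combinatorial BAP/QAP picture, namely that a convex quadratic $f$ attains its maximum over $P$ at an extreme point and a bilinear $g$ at a pair of extreme points; this is not needed for the value-equality itself but explains why the relaxation is exact. I would also note that the proof uses the constraints only through convexity of $P$, so the statement holds verbatim for any convex feasible set.
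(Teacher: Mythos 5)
Your proof is correct, and it takes a slightly different route from the paper's. The paper (following Konno's Theorem 2.2) works at the optimum of the bilinear problem: it compares $g(\xbf^*,\ybf^*)$ with the diagonal points $g(\xbf^*,\xbf^*)$ and $g(\ybf^*,\ybf^*)$, adds the two optimality inequalities to get $(\xbf^*-\ybf^*)^T\mathbf{Q}(\xbf^*-\ybf^*)\le 0$, and then uses positive semi-definiteness to conclude $\mathbf{Q}(\xbf^*-\ybf^*)=0$ and $\mathbf{c}^T(\xbf^*-\ybf^*)=0$, whence $g(\xbf^*,\ybf^*)=f(\xbf^*)=f(\ybf^*)$ and the equivalence follows. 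You instead promote the same algebraic quantity to a global inequality, $f(\xbf)+f(\ybf)-2g(\xbf,\ybf)=\tfrac12(\xbf-\ybf)^T\mathbf{Q}(\xbf-\ybf)\ge 0$ valid for every feasible pair, combine it with the diagonal embedding $g(\xbf,\xbf)=f(\xbf)$, and read off both the value equality and the optimizer correspondence from the equality cases. This is a bit more economical: you never need the intermediate facts $\mathbf{Q}(\xbf^*-\ybf^*)=0$ and $\mathbf{c}^T(\xbf^*-\ybf^*)=0$, and your forward direction ($\xbf^*$ optimal for \eqref{qap2} implies $(\xbf^*,\xbf^*)$ optimal for \eqref{bilinearqap2}) does not even presuppose that the bilinear maximum is attained. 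Two small remarks: both arguments need $\mathbf{Q}$ symmetric (you say so explicitly; the paper uses it implicitly when passing from $(\xbf^*-\ybf^*)^T\mathbf{Q}(\xbf^*-\ybf^*)=0$ to $\mathbf{Q}(\xbf^*-\ybf^*)=0$), which is harmless here since in the paper's application $\mathbf{Q}$ is a Kronecker product of symmetric PSD matrices; and your closing remark can be strengthened — neither proof uses convexity of the feasible set at all, only that the bilinear problem is constrained over a product $P\times P$ of one common feasible set, so the statement holds for an arbitrary feasible region of that product form.
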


\begin{proof}
This proof follows the proof of Theorem 2.2 in \cite{Konno1976}. Let $\zbf^{*}$ be optimal for \eqref{qap2} and $(\xbf^{*},\ybf^{*})$ be optimal for \eqref{bilinearqap2}. Then, by definition, for all $\xbf$ satisfying the constraints of \eqref{qap2}, $f(\zbf^{*})\geq f(\xbf)$. In particular, $f(\zbf^{*})\geq f(\xbf^{*})=g(\xbf^{*},\xbf^{*})$ and $f(\zbf^{*})\geq f(\ybf^{*})=g(\ybf^{*},\ybf^{*})$. Also, $g(\xbf^{*},\ybf^{*})\geq \max_{\xbf,\xbf \ \text{s.t} \ \mathbf{A} \xbf = \mathbf{b}, \xbf\geq 0} g(\xbf,\xbf)=f(\zbf^{*})$.

To prove the theorem, it suffices to prove that 
\begin{equation}
\label{toprove}
f(\ybf^{*})=f(\xbf^{*})=g(\xbf^{*},\ybf^{*}) 
\end{equation}
since, in this case, $g(\xbf^{*},\ybf^{*})=f(\xbf^{*})\geq f(\zbf^{*})$ and $g(\xbf^{*},\ybf^{*})=f(\ybf^{*})\geq f(\zbf^{*})$.

Let us prove \eqref{toprove}. Since $(\xbf^{*},\ybf^{*})$ is optimal, we have:
\begin{equation*}
\begin{split}
 0\leq g(\xbf^{*},\ybf^{*})-g(\xbf^{*},\xbf^{*})&= \frac{1}{2} \mathbf{c}^{T}(\ybf^{*}-\xbf^{*}) + \frac{1}{2} {\xbf^{*}}^{T} \mathbf{Q} (\ybf^{*}-\xbf^{*})\\
0\leq g(\xbf^{*},\ybf^{*})-g(\ybf^{*},\ybf^{*})&= \frac{1}{2} \mathbf{c}^{T}(\xbf^{*}-\ybf^{*}) + \frac{1}{2} {\ybf^{*}}^{T} \mathbf{Q} (\xbf^{*}-\ybf^{*}).
\end{split}
\end{equation*}

By adding these inequalities we obtain:
\begin{equation*}
(\xbf^{*}-\ybf^{*})^{T} \mathbf{Q} (\xbf^{*}-\ybf^{*})\leq 0.
\end{equation*}

Since $\mathbf{Q}$ is positive semi-definite, this implies that $\mathbf{Q} (\xbf^{*}-\ybf^{*})=0$. So, using previous inequalities, we have $\mathbf{c}^{T}(\xbf^{*}-\ybf^{*})=0$, hence $g(\xbf^{*},\ybf^{*})=g(\xbf^{*},\xbf^{*})=g(\ybf^{*},\ybf^{*})$ as required. 

Note also that this result holds when we add a constant term to the cost function. 
\end{proof}

\subsection{Proofs of Propositions 2 and 3}
We now prove all the theorems from Section 3 from the main paper. We first recall the GW problem for two matrices $\mathbf{C},\mathbf{C}'$:
\begin{equation}
\label{eq:gromov_sup}
GW(\mathbf{C},\mathbf{C}',\w,\w')=\min_{\GGs \in\Pi(\w,\w')} \langle L(\mathbf{C},\mathbf{C}') \otimes \GGs, \GGs \rangle.
\end{equation}

We will now prove the Proposition 2 in the main paper stated as follows.
\begin{prop}
Let $L=|\cdot|^{2}$ and suppose that $\mathbf{C} \in \R^{n\times n},\mathbf{C}' \in \R^{n'\times n'}$ are squared Euclidean distance matrices such that $\mathbf{C}=\xbf \mathbf{1}_{n}^{T}+\mathbf{1}_{n}\xbf^{T}-2\X\X^{T}, \mathbf{C}'=\xbf' \mathbf{1}_{n'}^{T}+\mathbf{1}_{n'}\xbf'^{T}-2\X'\X'^{T}$ with $\xbf=\text{diag}(\X\X^T),\xbf'=\text{diag}(\X'\X'^T)$. Then, the GW problem can be written as {a concave quadratic program (QP) which Hessian reads} $\mathbf{Q}=-4*\X\X^T \otimes_{K} \X'\X'^T$.
\end{prop}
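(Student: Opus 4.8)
The plan is to expand the quadratic Gromov--Wasserstein functional, strip away the pieces that are constant or affine on the transport polytope $\Pi(\w,\w')$, and recognise what remains as the quadratic form attached to the Kronecker product of two Gram matrices.

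First I would expand $\langle\L(\mathbf{C},\mathbf{C}')\otimes\GGs,\GGs\rangle=\sum_{i,j,k,l}(C_{i,k}-C'_{j,l})^{2}\,\GGs_{i,j}\GGs_{k,l}=\sum_{i,j,k,l}\big(C_{i,k}^{2}-2C_{i,k}C'_{j,l}+(C'_{j,l})^{2}\big)\GGs_{i,j}\GGs_{k,l}$. Since $\GGs\in\Pi(\w,\w')$ has prescribed marginals, the two ``pure'' sums collapse to $\GGs$-independent constants, $\sum_{i,j,k,l}C_{i,k}^{2}\GGs_{i,j}\GGs_{k,l}=\sum_{i,k}C_{i,k}^{2}w_{i}w_{k}$ and similarly for the $(C'_{j,l})^{2}$ term; this is exactly the decoupling exploited in Proposition~1 of \cite{peyre2016gromov} for costs $L(a,b)=f_{1}(a)+f_{2}(b)-h_{1}(a)h_{2}(b)$, here with $h_{1}=h_{2}=\mathrm{id}$. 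A direct index manipulation gives $\sum_{i,j,k,l}C_{i,k}C'_{j,l}\GGs_{i,j}\GGs_{k,l}=\langle\mathbf{C}\GGs\mathbf{C}'^{T},\GGs\rangle$, so on $\Pi(\w,\w')$ the GW objective is a constant minus $2\langle\mathbf{C}\GGs\mathbf{C}'^{T},\GGs\rangle$.

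Next I would inject the squared-distance structure: write $\mathbf{C}=\mathbf{D}-2\mathbf{G}$ with $\mathbf{D}=\xbf\mathbf{1}_{n}^{T}+\mathbf{1}_{n}\xbf^{T}$ and $\mathbf{G}=\X\X^{T}$, and likewise $\mathbf{C}'=\mathbf{D}'-2\mathbf{G}'$ with $\mathbf{G}'=\X'\X'^{T}$ (both $\mathbf{D}'$ and $\mathbf{G}'$ symmetric, hence $\mathbf{C}'^{T}=\mathbf{C}'$), and expand $\langle\mathbf{C}\GGs\mathbf{C}'^{T},\GGs\rangle$ into the four terms coming from $\{\mathbf{D},-2\mathbf{G}\}\times\{\mathbf{D}',-2\mathbf{G}'\}$. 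The crux is a short lemma: multiplying $\GGs$ on the left by $\mathbf{1}_{n}\xbf^{T}$ or $\xbf\mathbf{1}_{n}^{T}$, or on the right by $\xbf'\mathbf{1}_{n'}^{T}$ or $\mathbf{1}_{n'}\xbf'^{T}$, contracts one of the fixed quantities $\mathbf{1}_{n}^{T}\GGs=\w'^{T}$ or $\GGs\mathbf{1}_{n'}=\w$, and unrolling the pairing $\langle\,\cdot\,,\GGs\rangle$ then leaves a function that is affine in the single remaining occurrence of $\GGs$. Hence every term carrying a factor $\mathbf{D}$ or $\mathbf{D}'$ is affine on $\Pi(\w,\w')$, and only $4\langle\mathbf{G}\GGs\mathbf{G}',\GGs\rangle=\langle(-2\mathbf{G})\GGs(-2\mathbf{G}'),\GGs\rangle$ contributes curvature. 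Collecting everything, for some $\GGs$-independent matrix $\mathbf{b}$,
\[
GW(\mathbf{C},\mathbf{C}',\w,\w')=\mathrm{cst}+\langle\mathbf{b},\GGs\rangle-8\,\langle\X\X^{T}\GGs\,\X'\X'^{T},\GGs\rangle .
\]

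Finally, vectorising $\GGs$ turns the last term into $-8\,\vec(\GGs)^{T}\big(\X\X^{T}\otimes_{K}\X'\X'^{T}\big)\vec(\GGs)$, so on $\Pi(\w,\w')$ GW is a quadratic program in $\vec(\GGs)$ whose Hessian is a negative multiple of $\X\X^{T}\otimes_{K}\X'\X'^{T}$ --- the statement records it as $\mathbf{Q}=-4\,\X\X^{T}\otimes_{K}\X'\X'^{T}$. Since $\X\X^{T}$ and $\X'\X'^{T}$ are Gram matrices, hence positive semidefinite, and the Kronecker product of two positive semidefinite matrices is positive semidefinite, $\mathbf{Q}\preceq 0$, i.e.\ the program is concave. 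I expect the delicate parts to be the term-by-term verification of the affine-reduction lemma (checking that the ones-vectors always meet a prescribed marginal) and keeping transposes, index orderings and scalar constants straight when reading off $\mathbf{Q}$; everything else is routine linear algebra.
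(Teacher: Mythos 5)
Your proposal is correct and follows essentially the same route as the paper's proof: use the decomposition of the GW cost (Proposition 1 of Peyr\'e et al.) to reduce the objective to a constant minus $2\langle\mathbf{C}\GGs\mathbf{C}',\GGs\rangle$, substitute the squared-Euclidean structure $\mathbf{C}=\xbf\mathbf{1}^{T}+\mathbf{1}\xbf^{T}-2\X\X^{T}$ so that all terms involving the rank-one parts become constant or affine on $\Pi(\w,\w')$ via the marginal constraints, and identify the only curvature term as $\tr(\GGs^{T}\X\X^{T}\GGs\X'\X'^{T})$, whose vectorization yields a Hessian proportional to $-\X\X^{T}\otimes_{K}\X'\X'^{T}$, negative semi-definite as (minus) a Kronecker product of Gram matrices. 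The only difference is organizational (you group the two rank-one terms into $\mathbf{D}$ and invoke an affine-reduction lemma rather than expanding all cross terms by traces), and your candid handling of the scalar factor in front of the Hessian matches the level of precision in the paper's own bookkeeping.
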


This result is a consequence of the following lemma.
\begin{lemma}
\label{concavity_gw_theo_sup}
With previous notations and hypotheses, the GW problem can be formulated as:
\begin{align*}
GW(\mathbf{C},\mathbf{C}',\w,\w')&=\min_{\GGs \in\Pi(\w,\w')} -4\vec(\mathbf{M})^{T}\vec(\GGs) -8\vec(\GGs)^{T}\mathbf{Q}\vec(\GGs) +Cte  
\end{align*}
with
\begin{equation*}
\begin{split}
&\mathbf{M}=\xbf\xbf'^T-2\xbf\w'^{T}\X'\X'^T-2\X\X^T\w \xbf'^{T} \text{ and } \mathbf{Q}=\X\X^T \otimes_{K} \X'\X'^T,\\
&Cte=\sum_{i}\|\xbf_i-\xbf_j\|_{2}^{4} \w_i \w_j + \sum_{i}\|\xbf'_i-\xbf'_j\|_{2}^{4} \w'_i \w'_j -4 \w^{T}\xbf\w'^{T}\xbf'
\end{split}
\end{equation*}
 
\end{lemma}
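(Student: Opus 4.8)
The plan is to expand the Gromov–Wasserstein objective $\langle \mathbf{L}(\mathbf{C},\mathbf{C}')\otimes\GGs,\GGs\rangle$ directly, substituting the explicit forms $\mathbf{C}=\xbf\one_n^T+\one_n\xbf^T-2\X\X^T$ and $\mathbf{C}'=\xbf'\one_{n'}^T+\one_{n'}\xbf'^T-2\X'\X'^T$, and then collect the resulting terms according to whether they depend on $\GGs$ quadratically, linearly, or not at all. First I would recall from \cite{peyre2016gromov} that for $L=|\cdot|^2$ one has the decomposition $L(a,b)=a^2+b^2-2ab$, so that $\langle \mathbf{L}(\mathbf{C},\mathbf{C}')\otimes\GGs,\GGs\rangle = \sum_{i,j,k,l}(C_{i,k}^2 + C'^2_{j,l} - 2C_{i,k}C'_{j,l})\GGs_{i,j}\GGs_{k,l}$. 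Using the marginal constraints $\GGs\one_{n'}=\w$, $\GGs^T\one_n=\w'$, the first two sums telescope into the constant $Cte$ — specifically $\sum_{i,k}C_{i,k}^2\w_i\w_k + \sum_{j,l}C'^2_{j,l}\w'_j\w'_l$ together with the cross term $-4\w^T\xbf\,\w'^T\xbf'$ that will appear when the linear parts of $\mathbf{C},\mathbf{C}'$ interact with the $-2\X\X^T$ parts (this is where bookkeeping matters). Since $C_{i,k}=\|\xbf_i-\xbf_k\|_2^2$ when $\mathbf{C}$ is a genuine squared-Euclidean distance matrix, $C_{i,k}^2 = \|\xbf_i-\xbf_k\|_2^4$, which matches the stated $Cte$.

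Next I would handle the genuinely $\GGs$-dependent piece, namely $-2\sum_{i,j,k,l}C_{i,k}C'_{j,l}\GGs_{i,j}\GGs_{k,l}$. Expanding the product $C_{i,k}C'_{j,l}$ with the three-term form of each distance matrix produces nine terms. The terms where at least one factor is a "rank-one" piece like $x_i$ or $\one$ collapse under the marginal constraints to a bilinear form in $\vec(\GGs)$ that is actually \emph{linear} after using $\GGs\one=\w$ — these assemble into the matrix $\mathbf{M}=\xbf\xbf'^T - 2\xbf\w'^T\X'\X'^T - 2\X\X^T\w\xbf'^T$, contributing the $-4\vec(\mathbf{M})^T\vec(\GGs)$ term (the factor $-4 = -2\times 2$ arising from the cross-structure of the two linear summands in each $\mathbf{C}$). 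The single term that is genuinely quadratic and cannot be reduced by the marginals is $(-2)\cdot(-2\X\X^T)_{i,k}(-2\X'\X'^T)_{j,l} = -8 (\X\X^T)_{i,k}(\X'\X'^T)_{j,l}$, and $\sum_{i,j,k,l}(\X\X^T)_{i,k}(\X'\X'^T)_{j,l}\GGs_{i,j}\GGs_{k,l} = \vec(\GGs)^T (\X\X^T\otimes_K\X'\X'^T)\vec(\GGs)$ by the definition of the Kronecker product and the identity $\langle \mathbf{A}\GGs\mathbf{B}^T,\GGs\rangle = \vec(\GGs)^T(\mathbf{A}\otimes_K\mathbf{B})\vec(\GGs)$ (up to the usual index convention). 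This yields the $-8\vec(\GGs)^T\mathbf{Q}\vec(\GGs)$ term with $\mathbf{Q}=\X\X^T\otimes_K\X'\X'^T$, establishing Lemma \ref{concavity_gw_theo_sup}.

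Finally, Proposition \ref{concavity_gw_theo} follows immediately: writing the objective in the standard QP form $\mathbf{c}^T\vec(\GGs) + \tfrac12\vec(\GGs)^T(-16\,\mathbf{Q})\vec(\GGs) + Cte$ after minimizing, the Hessian of the quadratic part is $-16\,\mathbf{Q}$... — here I would double-check the numerical constant, since the statement claims Hessian $-4\X\X^T\otimes_K\X'\X'^T$, which suggests the normalization in the statement absorbs a factor of $4$ (likely the statement's $\mathbf{Q}$ refers to $\tfrac12 \nabla^2$ of a differently-scaled objective, or the $-8$ above combines with the symmetrization $\tfrac12(\mathbf{Q}+\mathbf{Q}^T)$ conventions). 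Regardless of the cosmetic constant, $\X\X^T$ and $\X'\X'^T$ are positive semi-definite Gram matrices, hence so is their Kronecker product, hence the Hessian is a negative multiple of a PSD matrix and the QP is concave.

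I expect the main obstacle to be purely organizational: correctly tracking the nine cross-terms from expanding $C_{i,k}C'_{j,l}$, verifying that exactly one survives as quadratic in $\GGs$ while the rest reduce via the constraints, and nailing down the numerical constants ($-4$, $-8$, and the content of $Cte$) so that they match the claimed form — there is no deep idea here beyond the reduction to a QP, so the risk is entirely in the bookkeeping of the bilinear algebra and the $\vec/\otimes_K$ index conventions. A secondary subtlety worth a sentence is justifying that the "telescoping under marginals" steps are legitimate, i.e. that $\sum_{i,j,k,l}f(i,k)\GGs_{i,j}\GGs_{k,l} = \sum_{i,k}f(i,k)\w_i\w_k$ whenever $f$ does not depend on $j,l$, which is just $\GGs^T\one = \w'$ applied twice — but this should be stated explicitly since it is used repeatedly.
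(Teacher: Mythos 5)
Your proposal is correct and takes essentially the same route as the paper's proof, which likewise starts from the $L=|\cdot|^2$ decomposition of \cite{peyre2016gromov}, substitutes the explicit forms $\mathbf{C}=\xbf\one_n^T+\one_n\xbf^T-2\X\X^T$ and $\mathbf{C}'=\xbf'\one_{n'}^T+\one_{n'}\xbf'^T-2\X'\X'^T$, and uses the marginal constraints (the paper via trace identities, you via index-wise bookkeeping) to sort the terms into the constant, the linear part $-4\vec(\mathbf{M})^T\vec(\GGs)$ and the quadratic part $-8\vec(\GGs)^T\mathbf{Q}\vec(\GGs)$. Two small remarks: the constant $-4\w^{T}\xbf\,\w'^{T}\xbf'$ actually comes from the two mixed-index linear--linear products ($x_i x'_l$ and $x_k x'_j$, each reducing to $\w^{T}\xbf\,\w'^{T}\xbf'$), not from interactions of the linear parts with the Gram terms, and the Hessian-constant mismatch you flag at the end is an inconsistency in the paper's own normalization (its proof of Proposition~\ref{concavity_gw_theo} states $\mathbf{Q}'=-4\X\X^T\otimes_K\X'\X'^T$ while the lemma's quadratic term gives a factor $-16$ in standard QP form), which, as you note, is immaterial for concavity.
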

\begin{proof}
Using the results in \cite{peyre2016gromov} for $L=|\cdot|^{2}$, we have $\mathbf{L}(\mathbf{C},\mathbf{C}') \otimes \GGs=c_{\mathbf{C},\mathbf{C}'}-2\mathbf{C}\GGs \mathbf{C}'$ with $c_{\mathbf{C},\mathbf{C}'}=(\mathbf{C})^{2}\w\one_{n'}^{T}+\one_{n}\w'^{T}(\mathbf{C}')^{2}$, where $(\mathbf{C})^{2}=(\mathbf{C}_{i,j}^{2})$ is applied element-wise.


We now have that
\begin{equation*}
\begin{split}
 &\langle \mathbf{C}\GGs \mathbf{C}', \GGs \rangle =\tr\big[{\GGs}^{T}(\xbf \one_{n}^{T}+\one_{n}\xbf^{T}-2\X\X^T) \GGs (\xbf' \one_{n'}^{T}+\one_{n'}\xbf'^{T}-2\X'\X'^T) \big] \\
 &=\tr\big[({\GGs}^{T}\xbf\one_{n}^{T}+\w'\xbf^{T}-2{\GGs}^{T}\X\X^T)(\GGs \xbf'\one_{n'}^{T}+\w \xbf'^{T}-2\GGs \mathbf{X}'\mathbf{X}'^T) \big]\\
 &=\tr\big[{\GGs}^{T}\xbf\w'^{T}\xbf'\one_{n'}^{T}+{\GGs}^{T}\xbf\xbf'^{T}-2{\GGs}^{T}\xbf\w'^{T}\mathbf{X}'\mathbf{X}'^T +\w'\xbf^{T}\GGs \xbf' \one_{n'}^{T}+\w'\xbf^{T}\w \xbf'^{T} - 2 \w'\xbf^{T}\GGs \mathbf{X}'\mathbf{X}'^T \\
 &-2 {\GGs}^{T} \X\X^T\GGs \xbf' \one_{n'}^{T} -2 {\GGs}^{T}\X\X^T\w \xbf'^{T} +4 {\GGs}^{T} \X\X^T\GGs \mathbf{X}'\mathbf{X}'^T\big] \\
 &\stackrel{*}{=}\tr\big[{\GGs}^{T}\xbf\w'^{T}(\xbf'\one_{n'}^{T}+\one_{n'}\xbf'^{T})+{\GGs}^{T}\xbf\xbf'^{T}+\w'\xbf^{T}\w \xbf'^{T}-2{\GGs}^{T}\xbf\w'^{T}\mathbf{X}'\mathbf{X}'^T-2\w'\xbf^{T}\GGs \mathbf{X}'\mathbf{X}'^T\\
 &-2 {\GGs}^{T} \X\X^T\GGs \xbf' \one_{n'}^{T} -2 {\GGs}^{T}\X\X^T\w \xbf'^{T} +4 {\GGs}^{T} \X\X^T\GGs \mathbf{X}'\mathbf{X}'^T\big],
 \end{split}
\end{equation*}
where in (*) we used:
\begin{equation*}
\begin{split}
\tr(\w'\xbf^{T}\GGs \xbf' \one_{n'}^{T})=\tr(\xbf'\one_{n'}^{T}\w'\xbf^{T}\GGs)=\tr({\GGs}^{T}\xbf\w'^{T}\one_{n'}\xbf'^{T}).
 \end{split}
\end{equation*}

Moreover, since:
\begin{equation*}
\begin{split}
&\tr({\GGs}^{T}\X\X^T\GGs \xbf' \one_{n'}^{T})=\tr(\one_{n'}^{T}{\GGs}^{T}\X\X^T\GGs \xbf')=\tr(\w^{T}\X\X^T\GGs \xbf')=\tr({\GGs}^{T}\X\X^T\w \xbf'^{T})
 \end{split}
\end{equation*}
and $\tr(\w'\xbf^{T}\GGs \mathbf{X}'\mathbf{X}'^T)=\tr({\GGs}^{T}\xbf\w'^{T}\mathbf{X}'\mathbf{X}'^T)$, we can simplify the last expression to obtain:
\begin{equation*}
\begin{split}
 &\langle \mathbf{C}\GGs \mathbf{C}', \GGs \rangle =\tr\big[{\GGs}^{T}\xbf\w'^{T}(\xbf'\one_{n'}^{T}+\one_{n'}\xbf'^{T})+{\GGs}^{T}\xbf\xbf'^{T}+\w'\xbf^{T}\w \xbf'^{T} \\
 &-4{\GGs}^{T}\xbf\w'^{T}\mathbf{X}'\mathbf{X}'^T-4{\GGs}^{T}\X\X^T\w \xbf'^{T}+4 {\GGs}^{T} \X\X^T\GGs \mathbf{X}'\mathbf{X}'^T \big].
 \end{split}
\end{equation*}

Finally, we have that
\begin{equation*}
\begin{split}
 &\langle \mathbf{C}\GGs \mathbf{C}', \GGs \rangle =\tr\big[{\GGs}^{T}\xbf\w'^{T}\xbf'\one_{n'}^{T}+{\GGs}^{T}\xbf\w'^{T}\one_{n'}\xbf'^{T}+{\GGs}^{T}\xbf\xbf'^{T}\\
 &+\w'\xbf^{T}\w \xbf'^{T} -4{\GGs}^{T}\xbf\w'^{T}\mathbf{X}'\mathbf{X}'^T-4{\GGs}^{T}\X\X^T\w \xbf'^{T}+4 {\GGs}^{T} \X\X^T\GGs \mathbf{X}'\mathbf{X}'^T \big] \\
 &=\tr\big[ 2 \w' \xbf^{T} \w \xbf'^{T}+2{\GGs}^{T}\xbf\xbf'^{T} -4{\GGs}^{T}\xbf\w'^{T}\mathbf{X}'\mathbf{X}'^T-4{\GGs}^{T}\X\X^T\w \xbf'^{T}+4 {\GGs}^{T} \X\X^T\GGs \mathbf{X}'\mathbf{X}'^T \big] \\
 &=2\w^{T}\xbf \w'^{T}\xbf'+ 2 \langle \xbf\xbf'^T-2\xbf\w^{T}\mathbf{X}'\mathbf{X}'^T-2\X\X^T\w\xbf'^{T},\GGs\rangle +4\tr({\GGs}^{T} \X\X^T\GGs \mathbf{X}'\mathbf{X}'^T).
 \end{split}
\end{equation*}

The term $2\w^{T}\xbf \w'^{T}\xbf'$ is constant since it does not depend on the coupling. Also, we can verify that $c_{\mathbf{C},\mathbf{C}'}$ does not depend on $\GGs$ as follows:
\begin{equation*}
\begin{split}
\langle c_{\mathbf{C},\mathbf{C}'}, \GGs\rangle&=\sum_{i}\|\xbf_i-\xbf_j\|_{2}^{4} \w_i \w_j + \sum_{i}\|\xbf'_i-\xbf'_j\|_{2}^{4} \w'_i \w'_j
 \end{split}
\end{equation*}
implying that:
\begin{equation*}
\begin{split}
 &\langle c_{\mathbf{C},\mathbf{C}'}-2\mathbf{C}\GGs \mathbf{C}', \GGs \rangle = Cte-4 \langle \xbf\xbf'^T-2\xbf\w^{T}\mathbf{X}'\mathbf{X}'^T-2\X^{T}\X\w\xbf'^{T}, \GGs \rangle -8 \tr({\GGs}^{T} \X\X^T\GGs \mathbf{X}'\mathbf{X}'^T).
 \end{split}
\end{equation*}
We can rewrite this equation as stated in the proposition using the $\vec$ operator. 

Using a standard QP form $\mathbf{c}^T \xbf +\frac{1}{2}\xbf \mathbf{Q}' \xbf^{T}$ with $\mathbf{c}=-4\vec(\mathbf{M})$ and $\mathbf{Q}'=-4\X\X^T \otimes_{K} \X'\X'^T$ we see that the Hessian is negative semi-definite as the opposite of a Kronecker product of positive semi-definite matrices $\X\X^T$ and $\X'\X'^T$.
\end{proof}

Using previous propositions we are able to prove the Proposition 3 of the paper.
\begin{prop}
\label{prop:cot_equal_gw_sup}
Let $\mathbf{C} \in \R^{n\times n},\mathbf{C}' \in \R^{n'\times n'}$ be any symmetric matrices, then: 
$$\COOT(\mathbf{C},\mathbf{C}',\w,\w',\w,\w')\leq GW(\mathbf{C},\mathbf{C}',\w,\w').$$
The converse is also true {under the hypothesis of Proposition \ref{concavity_gw_theo}}. In this case, if $(\GGs_{*},\GGv_{*})$ is an optimal solution of
\eqref{eq:co-optimal-transport}, then both $\GGs_{*},\GGv_{*}$ are solutions of
\eqref{eq:gromov_sup}. Conversely, if $\GGs_{*}$ is an optimal solution of
\eqref{eq:gromov_sup}, then $(\GGs_{*},\GGs_{*})$ is an optimal solution for
\eqref{eq:co-optimal-transport} .
\end{prop}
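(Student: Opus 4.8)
The plan is to split the statement into the inequality, which holds for arbitrary symmetric $\mathbf{C},\mathbf{C}'$, and the converse, which uses the concave-QP reformulation of Proposition~\ref{concavity_gw_theo} together with the QAP$\leftrightarrow$BAP reduction of Theorem~\ref{equivalence_theo}. Write $B(\GGs,\GGv):=\langle\L(\mathbf{C},\mathbf{C}')\otimes\GGs,\GGv\rangle$, so that $\COOT(\mathbf{C},\mathbf{C}',\w,\w',\w,\w')=\min_{\GGs,\GGv\in\Pi(\w,\w')}B(\GGs,\GGv)$ while $GW(\mathbf{C},\mathbf{C}',\w,\w')=\min_{\GGs\in\Pi(\w,\w')}B(\GGs,\GGs)$; taking the feature weights equal to $\w,\w'$ is precisely what makes $\GGv$ range over the same polytope $\Pi(\w,\w')$ as $\GGs$. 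The inequality is then a one-line relaxation argument: every pair $(\GGs,\GGs)$ with $\GGs\in\Pi(\w,\w')$ is feasible for the COOT problem, so minimizing over the larger feasible set can only lower the value, giving $\COOT\le GW$; no structure on $\mathbf{C},\mathbf{C}'$ is needed here. When in addition $\mathbf{C},\mathbf{C}'$ are symmetric, relabelling $(i,j)\leftrightarrow(k,l)$ in $B(\GGs,\GGv)=\sum_{i,j,k,l}L(\mathbf{C}_{i,k},\mathbf{C}'_{j,l})\GGs_{i,j}\GGv_{k,l}$ shows $B(\GGs,\GGv)=B(\GGv,\GGs)$, i.e.\ $B$ is a symmetric bilinear form whose diagonal restriction is the GW objective.

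For the converse, I would use Proposition~\ref{concavity_gw_theo} / Lemma~\ref{concavity_gw_theo_sup}, which under the stated hypothesis writes $B(\GGs,\GGs)$, for $\GGs\in\Pi(\w,\w')$, in the quadratic form $\mathbf{c}^{\top}\vec(\GGs)+\tfrac12\vec(\GGs)^{\top}\mathbf{Q}'\vec(\GGs)+Cte$ with $\mathbf{c}$ built from $\mathbf{M}$ and $\mathbf{Q}'$ a negative multiple of the Kronecker product $\X\X^{\top}\otimes_{K}\X'\X'^{\top}$, hence negative semi-definite. Running the very same trace computation as in the proof of Lemma~\ref{concavity_gw_theo_sup}, but carrying the outer coupling along as $\GGv$ instead of $\GGs$ (every identity there uses only $\GGs\,\one=\w$ and $\GGs^{\top}\one=\w'$, which also hold for $\GGv\in\Pi(\w,\w')$), yields the polarized identity $B(\GGs,\GGv)=\tfrac12\mathbf{c}^{\top}\vec(\GGs)+\tfrac12\mathbf{c}^{\top}\vec(\GGv)+\tfrac12\vec(\GGs)^{\top}\mathbf{Q}'\vec(\GGv)+Cte$ on $\Pi(\w,\w')\times\Pi(\w,\w')$. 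Thus $-\COOT$ (up to the harmless additive constant $Cte$) is the optimal value of the bilinear problem \eqref{bilinearqap2} associated, via Theorem~\ref{equivalence_theo}, with the convex quadratic maximization \eqref{qap2} $\max_{\GGs\in\Pi(\w,\w')}[-B(\GGs,\GGs)+Cte]$ --- whose Hessian $-\mathbf{Q}'$ is positive semi-definite as a Kronecker product of Gram matrices --- and this value is $-GW+Cte$. Since both couplings range over the same polytope and the constant is harmless (as noted in Theorem~\ref{equivalence_theo}), applying that theorem gives at once $\COOT(\mathbf{C},\mathbf{C}',\w,\w',\w,\w')=GW(\mathbf{C},\mathbf{C}',\w,\w')$ and the minimizer correspondence: a solution $(\GGs_*,\GGv_*)$ of \eqref{eq:co-optimal-transport} makes both $\GGs_*$ and $\GGv_*$ solutions of \eqref{eq:gromov_sup}, and a solution $\GGs_*$ of \eqref{eq:gromov_sup} makes $(\GGs_*,\GGs_*)$ a solution of \eqref{eq:co-optimal-transport}.

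The step I expect to be the main obstacle is identifying the COOT bilinear objective $B(\GGs,\GGv)$ with the polarization of the GW quadratic on the product polytope. The subtlety is that the quadratic form produced by Lemma~\ref{concavity_gw_theo_sup} is an \emph{inhomogeneous} expression that agrees with the homogeneous polynomial $\sum L(\mathbf{C}_{i,k},\mathbf{C}'_{j,l})\GGs_{i,j}\GGs_{k,l}$ only on $\Pi(\w,\w')$ --- the rewriting uses the marginal constraints --- so one cannot simply invoke the algebraic polarization identity and must instead re-derive the bilinear expansion directly on the polytope, which is why I would recommend literally rerunning the computation of Lemma~\ref{concavity_gw_theo_sup} with the second coupling kept distinct. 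Everything else --- positive semi-definiteness of the relevant Hessian, equality of the two constraint polytopes, and constant-shift invariance of Theorem~\ref{equivalence_theo} --- is then immediate.
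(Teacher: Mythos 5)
Your proposal matches the paper's proof essentially step for step: the inequality via feasibility of the diagonal pairs $(\GGs,\GGs)$ (suboptimality), and the converse by re-deriving the bilinear expansion of the COOT objective on $\Pi(\w,\w')\times\Pi(\w,\w')$ exactly as in Lemma~\ref{concavity_gw_theo_sup} and then invoking the Konno QAP--BAP equivalence (Theorem~\ref{equivalence_theo}) for the sign-flipped, positive semi-definite quadratic. Your explicit remarks on the sign flip, the constant-shift invariance, and the need to redo the computation on the polytope (rather than polarize algebraically) are exactly the implicit steps in the paper's argument, so the approach is the same.
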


\begin{proof}
The first inequality follows from the fact that any optimal solution of the GW problem is an admissible solution for the \COOT\ problem, hence the inequality is true by suboptimality of this optimal solution.

For the equality part, by following the same calculus as in the proof of Proposition \ref{concavity_gw_theo_sup}, we can verify that:
\begin{equation*}
\begin{split}
\COOT(\mathbf{C},\mathbf{C}',\w,\w',\w,\w')&=\min_{\GGs \in\Pi(\w,\w')} -2\vec(\mathbf{M})^{T}\vec(\GGs)\\
&-2\vec(\mathbf{M})^{T}\vec(\GGv) -8\vec(\GGs)^{T}\mathbf{Q}\vec(\GGv) +Cte,
\end{split}
\end{equation*}
with $\mathbf{M},\mathbf{Q}$ as defined in Proposition \ref{concavity_gw_theo_sup}.

Since $\mathbf{Q}$ is negative semi-definite, we can apply Theorem \ref{equivalence_theo} to prove that both problems are equivalent and lead to the same cost and that every optimal solution of GW is an optimal solution of \COOT\ and vice versa.
\end{proof}

\subsection{Equivalence of DC algorithm and Frank-Wolfe algorithm for GW}

Let us first recall the general algorithm used for solving \COOT\ for arbitrary datasets.
\begin{algorithm}[H]
\caption{\label{alg:bcd_coot}
   BCD for \COOT}
\begin{algorithmic}[1]
        \State \textbf{Input:} maxIt, thd
        \State $\pi^{s}_{(0)}\leftarrow \w\w'^{T},\pi^{v}_{(0)}\leftarrow \v\v'^{T}, k \leftarrow 0$
          \While {$k < $ maxIt {\bf and} $err >$ thd} 
          \State $\GGv_{(k)} \leftarrow OT(\v,\v',L(\X,\X')\otimes \GGs_{(k-1)})$
          \State $\GGs_{(k)} \leftarrow OT(\w,\w',L(\X,\X')\otimes \GGv_{(k-1)})$
          \State $err \leftarrow ||\GGv_{(k-1)} - \GGv_{(k)}||_F$
          \State $k\leftarrow k+1$          
          \EndWhile
        \end{algorithmic}
\end{algorithm}

Using Proposition \ref{prop:cot_equal_gw_sup}, we know that when $\mathbf{X}=\mathbf{C}$, $\mathbf{X}'=\mathbf{C}'$ are squared Euclidean matrices, then there is an optimal solution of the form $(\GG^{*},\GG^{*})$. In this case, we can set $\GGs_{(k)}=\GGv_{(k)}$ during the iterations of Algorithm \ref{alg:bcd_coot} to obtain an optimal solution for both \COOT\ and GW. This reduces to Algorithm \ref{alg:bcd2_sup} that corresponds to a DC algorithm where the quadratic form is replaced by its linear upper bound. 

Below, we prove that this DC algorithm for solving GW problems is equivalent to the Frank-Wolfe (FW) based algorithm presented in \cite{vay2019fgw} and recalled in Algorithm \ref{alg:cg_gw} when $L=|\cdot|^2$ and for squared Euclidean distance matrices $\mathbf{C}', \mathbf{C}''$. 


\begin{algorithm}[H]
    \caption{\label{alg:bcd2_sup}
     DC Algorithm for \COOT\ and GW with squared Euclidean distance matrices}
        \begin{algorithmic}[1]
        \State \textbf{Input:} maxIt, thd
            \State $\pi^{s}_{(0)}\leftarrow \w \w'^{T}$
            \While {$k < $ maxIt {\bf and} $err >$ thd} 
            \State $\GGs_{(k)} \leftarrow OT(\w,\w',L(\mathbf{C},\mathbf{C}')\otimes \GGs_{(k-1)})$
           \State $err \leftarrow ||\GGs_{(k-1)} - \GGs_{(k)}||_F$
          \State $k\leftarrow k+1$          
          \EndWhile
        \end{algorithmic}
\end{algorithm}

\begin{algorithm}[H]
    \caption{\label{alg:cg_gw}
     FW Algorithm for GW \cite{vay2019fgw}}
    \begin{algorithmic}[1]
    \State \textbf{Input:} maxIt, thd
        \State $\pi^{(0)}\leftarrow \w\w'^\top$
        \While {$k < $ maxIt {\bf and} $err >$ thd} 
        \State $\mathbf{G}\leftarrow$ Gradient from Eq. \eqref{eq:gromov_sup} \emph{w.r.t.} $\GGs_{(k-1)}$
        \State $\tilde\GGs_{(k)}\leftarrow OT(\w,\w', \mathbf{G})$
        \State $\mathbf{z}_{k}(\tau) \leftarrow \GGs_{(k-1)}+\tau(\tilde\GGs_{(k)}-\GGs_{(k-1)})$ for $\tau\in(0,1)$
        \State $\tau^{(k)}\leftarrow \underset{\tau\in(0,1)}{\text{argmin}} \langle L(\mathbf{C},\mathbf{C}') \otimes \mathbf{z}_{k}(\tau), \mathbf{z}_{k}(\tau) \rangle$
        \State $\GGs_{(k)}\leftarrow (1-\tau^{(k)})\GGs_{(k-1)}+\tau^{(k)}\tilde\GGs_{(k)} $
         \State $err \leftarrow ||\GGs_{(k-1)} - \GGs_{(k)}||_F$
          \State $k\leftarrow k+1$          
        \EndWhile
    \end{algorithmic}
\end{algorithm}

The case when $L=|\cdot|^{2}$ and $\mathbf{C},\mathbf{C}'$ are squared Euclidean distance matrices has interesting implications in practice, since in this case the resulting GW problem is a concave QP (as explained in the paper and shown in Lemma \ref{concavity_gw_theo_sup} of this supplementary). In \cite{Maron:2018}, the authors investigated the solution to QP with \emph{conditionally concave energies} using a FW algorithm and showed that in this case the line-search step of the FW is always $1$. Moreover, as shown in Proposition \ref{concavity_gw_theo_sup}, the GW problem can be written as a concave QP with concave energy and is minimizing \textit{a fortiori} a conditionally concave energy. Consequently, the line-search step of the FW algorithm proposed in \cite{vay2019fgw} and described in Algorithm\ref{alg:cg_gw} always leads to an optimal line-search step of $1$. In this case, the Algorithm.\ref{alg:cg_gw} is equivalent to Algorithm \ref{alg:cg_gw_squared} goven below, since $\tau^{(k)}=1$ for all $k$.
\begin{algorithm}[H]
    \caption{\label{alg:cg_gw_squared}
     FW Algorithm for GW with squared Euclidean distance matrices}
    \begin{algorithmic}[1]
    \State \textbf{Input:} maxIt, thd
        \State $\pi^{(0)}\leftarrow \w\w'^\top$
        \While {$k < $ maxIt {\bf and} $err >$ thd} 
        \State $\mathbf{G}\leftarrow$ Gradient from Eq. \eqref{eq:gromov_sup} \emph{w.r.t.} $\GGs_{(k-1)}$
        \State $\GGs_{(k)}\leftarrow OT(\w,\w', \mathbf{G})$
         \State $err \leftarrow ||\GGs_{(k-1)} - \GGs_{(k)}||_F$
          \State $k\leftarrow k+1$   
        \EndWhile
    \end{algorithmic}
\end{algorithm}

Finally, by noticing that in the step 3 of Algorithm \ref{alg:cg_gw_squared} the gradient of \eqref{eq:gromov_sup} \textit{w.r.t } $\GGs_{(k-1)}$ is $2L(\mathbf{C},\mathbf{C}')\otimes \GGs_{(k-1)}$, which gives the same OT solution as for the OT problem in step 3 of Algorithm \ref{alg:bcd2_sup}, we can conclude that the iterations of both algorithms are equivalent.

\subsection{Relation with Invariant OT}
\label{sec:invot}
The objective of this part is to prove the connections between GW, \COOT\ and $\text{InvOT}$ \cite{alavarez:2019} defined as follows: 
\begin{align*}
    \text{InvOT}_p^L(\X,\X') := \min_{\GG \in \Pi(\w,\w')} \min_{f \in \mathcal{F}_p} \ \langle \mathbf{M}_f, \GG \rangle_F,
    \label{eq:OTalvarez}
\end{align*}
where $(\mathbf{M}_f)_{ij} = L(\x_i,f(\x'_j))$ and $\mathcal{F}_p$ is a space of matrices with bounded Shatten p-norms, \ie, $\mathcal{F}_p = \{\mathbf{P} \in \R^{d\times d}:||\mathbf{P}||_p\leq k_p\}$. 

We prove the following result.
\begin{prop}
\label{prop:cot_equal_gw_equal_invariantot_sup}
Using previous notations, $L = |\cdot|^{2}$, $p=2$, (\textit{i.e} $\mathcal{F}_2 = \{\mathbf{P} \in \R^{d\times d}:||\mathbf{P}||_{F}=\sqrt{d}\}$) and cosine similarities $\mathbf{C}=\X\X^T,\mathbf{C}'=\X'\X'^T$. Suppose that $\X'$ is $\w'$-whitened, \textit{i.e} $\X'^{T}\text{diag}(\w')\X=I$. Then, $\text{InvOT}_2^L(\X, \X')$, $\COOT(\mathbf{C},\mathbf{C}')$ and $GW(\mathbf{C},\mathbf{C}')$ are equivalent, namely any optimal coupling of one of this problem is a solution to others problems.
\end{prop}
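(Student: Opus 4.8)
The plan is to reduce both $GW$ and $\text{InvOT}_2^L$, over the common polytope $\Pi(\w,\w')$, to the maximization of a strictly increasing function of the single scalar $\langle\mathbf{C}\GGs\mathbf{C}',\GGs\rangle$, to obtain $\COOT=GW$ by essentially transcribing the proof of Proposition~\ref{prop:cot_equal_gw_sup}, and then to conclude from monotonicity that the three problems share the same optimal couplings.

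\textbf{Step 1: $\COOT=GW$ for cosine similarities.} I would first rerun the proof of Proposition~\ref{prop:cot_equal_gw_sup}, replacing the squared-Euclidean distance matrices by the Gram matrices $\mathbf{C}=\X\X^T$, $\mathbf{C}'=\X'\X'^T$; this only uses that $\mathbf{C},\mathbf{C}'$ are positive semi-definite. Since $L=|\cdot|^2$, the identity $\mathbf{L}(\mathbf{C},\mathbf{C}')\otimes\GGs=c_{\mathbf{C},\mathbf{C}'}-2\mathbf{C}\GGs\mathbf{C}'$ of \cite{peyre2016gromov} gives $GW(\mathbf{C},\mathbf{C}',\w,\w')=\text{Cte}-2\max_{\GGs\in\Pi(\w,\w')}\langle\mathbf{C}\GGs\mathbf{C}',\GGs\rangle$ (the term $\langle c_{\mathbf{C},\mathbf{C}'},\GGs\rangle$ depends on $\GGs$ only through its fixed marginals), i.e.\ a concave QP whose Hessian is $-4\,\X\X^T\otimes_{K}\X'\X'^T$, negative semi-definite as minus a Kronecker product of PSD matrices (the ``$\xbf$'' terms of Lemma~\ref{concavity_gw_theo_sup} simply being absent here). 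The same computation writes $\COOT(\mathbf{C},\mathbf{C}',\w,\w',\w,\w')$ as the bilinear relaxation of this QP, so Theorem~\ref{equivalence_theo} yields $\COOT=GW$ together with the correspondence $\GGs_*\leftrightarrow(\GGs_*,\GGs_*)$ between their optimizers.

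\textbf{Step 2: reduction of $\text{InvOT}_2^L$.} Since $\mathcal{F}_2$ consists of linear maps $\mathbf{P}$ with $\|\mathbf{P}\|_F=\sqrt d$ (the Schatten-$2$ norm being the Frobenius norm) and $L=|\cdot|^2$, I would expand $\langle\mathbf{M}_f,\GG\rangle_F=\sum_{i,j}\GG_{i,j}\|\x_i-\mathbf{P}\x'_j\|_2^2$ into three pieces: $\sum_i w_i\|\x_i\|_2^2$, a constant; $\sum_j w'_j\|\mathbf{P}\x'_j\|_2^2=\tr\!\big(\mathbf{P}^T\mathbf{P}\,\X'^T\text{diag}(\w')\X'\big)=\|\mathbf{P}\|_F^2=d$, also constant, where the $\w'$-whitening hypothesis $\X'^T\text{diag}(\w')\X'=I$ is exactly what makes it independent of $\mathbf{P}$; and the cross term $-2\langle\mathbf{P},\X^T\GG\X'\rangle_F$. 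Minimizing over $\mathbf{P}$ on the sphere $\|\mathbf{P}\|_F=\sqrt d$ and using Cauchy--Schwarz then gives $\min_{f\in\mathcal{F}_2}\langle\mathbf{M}_f,\GG\rangle_F=\text{Cte}-2\sqrt d\,\|\X^T\GG\X'\|_F$, attained at $\mathbf{P}=\sqrt d\,\X^T\GG\X'/\|\X^T\GG\X'\|_F$, whence $\text{InvOT}_2^L(\X,\X')=\text{Cte}-2\sqrt d\,\max_{\GG\in\Pi(\w,\w')}\|\X^T\GG\X'\|_F$.

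\textbf{Step 3: conclusion by monotonicity.} Finally I would note that $\|\X^T\GG\X'\|_F^2=\tr\!\big(\X'^T\GG^T\X\X^T\GG\X'\big)=\langle\mathbf{C}\GG\mathbf{C}',\GG\rangle\ge 0$, so $GW$ maximizes $\langle\mathbf{C}\GG\mathbf{C}',\GG\rangle$ and $\text{InvOT}_2^L$ maximizes $\sqrt{\langle\mathbf{C}\GG\mathbf{C}',\GG\rangle}$ over the very same set $\Pi(\w,\w')$; since $t\mapsto\sqrt t$ is strictly increasing on $\R_+$, the two problems have the same set of optimal couplings. Combined with Step~1, any optimal coupling of one of $\text{InvOT}_2^L$, $GW$, $\COOT$ is optimal for the other two (an optimal $\GGs$ on the $GW$/$\text{InvOT}$ side pairing with $(\GGs,\GGs)$ on the $\COOT$ side), which is the claim. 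The main obstacle I anticipate is Step~2---correctly exploiting the $\w'$-whitening to neutralize the $\|\mathbf{P}\x'_j\|^2$ term and pinning down the Cauchy--Schwarz optimum over $\mathcal{F}_2$; it parallels \cite[Lemma~4.3]{alavarez:2019}, while Step~1 is essentially a verbatim re-use of Proposition~\ref{prop:cot_equal_gw_sup}.
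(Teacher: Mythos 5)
Your proof is correct, and it differs from the paper's in one leg of the argument. For the $\COOT$--$GW$ part you follow exactly the paper's route: write both problems as a QP/BP pair with Hessian $\mathbf{Q}=-4\,\X\X^T\otimes_{K}\X'\X'^T$ (Proposition \ref{prop:wrintinqapbap}), note it is negative semi-definite because $\mathbf{C},\mathbf{C}'$ are Gram matrices, and invoke Theorem \ref{equivalence_theo}, including the correspondence $\GGs_*\leftrightarrow(\GGs_*,\GGs_*)$. For the $GW$--$\text{InvOT}$ part, however, the paper simply normalizes the columns of $\mathbf{C},\mathbf{C}'$ and cites \cite[Lemma 4.3]{alavarez:2019}, whereas you re-derive the equivalence from scratch: the expansion of $\langle \mathbf{M}_f,\GG\rangle$, the use of $\w'$-whitening to make $\sum_j w'_j\|\mathbf{P}\x'_j\|^2=\|\mathbf{P}\|_F^2=d$ constant on the Schatten-2 sphere, the Cauchy--Schwarz optimum $\mathbf{P}\propto\X^T\GG\X'$, the identity $\|\X^T\GG\X'\|_F^2=\langle\mathbf{C}\GG\mathbf{C}',\GG\rangle$, and the strict monotonicity of the square root (together with the observation that the linear term of $GW$ is constant on $\Pi(\w,\w')$) are all correct. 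What your version buys is a self-contained argument that makes explicit exactly where the whitening hypothesis and the sphere constraint $\|\mathbf{P}\|_F=\sqrt d$ enter, and it dispenses with the column-normalization step the paper needs only to match the hypotheses of the cited lemma; what the paper's version buys is brevity by outsourcing that computation. The only caveat is that your Step 2 relies on the equality form of $\mathcal{F}_2$ given in the proposition statement (a sphere rather than the ball in the general definition of $\mathcal{F}_p$), which is consistent with the statement but worth flagging, since with a ball constraint the term $\|\mathbf{P}\|_F^2$ would no longer be constant and the minimization over $\mathbf{P}$ would have to be redone.
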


In order to prove this proposition, we will need the following proposition:
\begin{prop}
\label{prop:wrintinqapbap}
If $L=|\cdot|^{2}$, we have that 
\begin{equation*}
\begin{split}
&GW(\mathbf{C},\mathbf{C}',\w,\w')=\!\!\!\!\!\!\min_{\GGs \in\Pi(\w,\w')} \!\!\!\!\mathbf{c}^{T} \vec(\GGs)+\frac{1}{2} \vec(\GGs)^{T} \mathbf{Q} \vec(\GGs) \\
&\COOT(\mathbf{C},\mathbf{C}',\w,\w')=\!\!\!\!\!\!\min_{\GGs,\GGv \in\Pi(\w,\w')} \frac{1}{2}\mathbf{c}^{T} \vec(\GGs)+\frac{1}{2}\mathbf{c}^{T}\vec(\GGv)+\frac{1}{2} \vec(\GGs)^{T} \mathbf{Q} \vec(\GGv).
\end{split}
\end{equation*} 
with $\mathbf{Q}=-4\mathbf{C} \otimes \mathbf{C}', \mathbf{c}=\text{vec}(\mathbf{C}\w \one_{n'}^{T}+\one_{n}\w'\mathbf{C}')$.
\end{prop}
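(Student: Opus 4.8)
The plan is to expand the tensor contractions $\langle \mathbf{L}(\mathbf{C},\mathbf{C}')\otimes\GGs,\GGs\rangle$ and $\langle \mathbf{L}(\mathbf{C},\mathbf{C}')\otimes\GGs,\GGv\rangle$ explicitly using the factorization of the squared-Euclidean loss. First I would invoke Proposition 1 of \cite{peyre2016gromov}: for $L=|\cdot|^2$ one has $\mathbf{L}(\mathbf{C},\mathbf{C}')\otimes\GGs = c_{\mathbf{C},\mathbf{C}'} - 2\mathbf{C}\GGs\mathbf{C}'$, where $c_{\mathbf{C},\mathbf{C}'}=(\mathbf{C})^2\w\one_{n'}^T+\one_n\w'^T(\mathbf{C}')^2$ does not depend on the coupling. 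Here $\mathbf{C},\mathbf{C}'$ are symmetric (indeed cosine-Gram matrices in the intended application), so $\mathbf{C}^T=\mathbf{C}$ and $\mathbf{C}'^T=\mathbf{C}'$, which I will use to symmetrize the linear part. Taking the Frobenius inner product with $\GGs$ (resp. $\GGv$) then gives
\begin{equation*}
\langle \mathbf{L}(\mathbf{C},\mathbf{C}')\otimes\GGs,\GGs\rangle = \langle c_{\mathbf{C},\mathbf{C}'},\GGs\rangle - 2\langle \mathbf{C}\GGs\mathbf{C}',\GGs\rangle,
\end{equation*}
and analogously with the last $\GGs$ replaced by $\GGv$ for the COOT objective.

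Next I would rewrite each term with the $\vec$ operator. The key identity is $\langle \mathbf{C}\GGs\mathbf{C}',\GGv\rangle = \vec(\GGv)^T(\mathbf{C}\otimes_K\mathbf{C}')\vec(\GGs)$ (up to the standard bookkeeping of how $\vec$ interacts with the Kronecker product, and using symmetry of $\mathbf{C},\mathbf{C}'$ to avoid transposes), so that the quadratic/bilinear coupling term becomes $-2\,\vec(\GGs)^T\mathbf{Q}'\vec(\GGv)$ with $\mathbf{Q}'=\mathbf{C}\otimes_K\mathbf{C}'$; matching the normalization $\tfrac12\vec(\GGs)^T\mathbf{Q}\vec(\GGv)$ of the statement forces $\mathbf{Q}=-4\mathbf{C}\otimes_K\mathbf{C}'$. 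For the linear part, since $\GGs\in\Pi(\w,\w')$ the constant piece satisfies $\langle c_{\mathbf{C},\mathbf{C}'},\GGs\rangle = \w^T(\mathbf{C})^2\w + \w'^T(\mathbf{C}')^2\w'$, which is genuinely constant over the polytope and can be absorbed (the statement omits it, or one notes it is a fixed additive term); the remaining linear contribution is repackaged, using $\mathbf{C}=\mathbf{C}^T$, as $\langle \mathbf{C}\w\one_{n'}^T+\one_n\w'^T\mathbf{C}',\GGs\rangle$, i.e. $\mathbf{c}^T\vec(\GGs)$ with $\mathbf{c}=\vec(\mathbf{C}\w\one_{n'}^T+\one_n\w'^T\mathbf{C}')$ — wait, I should double-check the provenance of this linear term: it actually comes from further expanding $\langle \mathbf{C}\GGs\mathbf{C}',\GGs\rangle$ using the rank-one structure $\mathbf{C}=\xbf\one_n^T+\one_n\xbf^T-2\X\X^T$, exactly as in the proof of Lemma \ref{concavity_gw_theo_sup}; so the clean statement here follows by the same computation specialized to cosine matrices where $\mathbf{C}=\X\X^T$. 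Thus for GW I collect $\min_{\GGs}\mathbf{c}^T\vec(\GGs)+\tfrac12\vec(\GGs)^T\mathbf{Q}\vec(\GGs)$, and for COOT the doubled-but-halved linear term $\tfrac12\mathbf{c}^T\vec(\GGs)+\tfrac12\mathbf{c}^T\vec(\GGv)$ together with $\tfrac12\vec(\GGs)^T\mathbf{Q}\vec(\GGv)$, which is exactly the $(f,g)$ pairing of Theorem \ref{equivalence_theo}.

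The main obstacle — really the only delicate point — is tracking the constant $c_{\mathbf{C},\mathbf{C}'}$ term and the various rank-one cross-terms so that everything that is not linear or bilinear in the couplings really is constant on $\Pi(\w,\w')$, and getting the Kronecker conventions (row-major vs. column-major $\vec$, and where the transposes land) consistent between the GW quadratic form and the COOT bilinear form. I would handle this exactly as in Lemma \ref{concavity_gw_theo_sup}: expand $\mathbf{C}=\xbf\one^T+\one\xbf^T-2\X\X^T$, use the marginal constraints $\GGs\one_{n'}=\w$, $\GGs^T\one_n=\w'$ to kill or freeze the $\one$-terms, and use cyclicity of the trace together with symmetry of $\mathbf{C},\mathbf{C}'$ to symmetrize. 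Once the GW identity is established, the COOT identity requires no new work: the same algebra goes through verbatim with the second copy of $\GGs$ replaced by $\GGv$, and since $\GGv$ has the same marginals $(\w,\w')$ the constant and linear terms are identical, producing the symmetrized $\tfrac12\mathbf{c}^T\vec(\GGs)+\tfrac12\mathbf{c}^T\vec(\GGv)$ form. This sets up the subsequent application of Theorem \ref{equivalence_theo} (via negative semi-definiteness of $\mathbf{Q}=-4\mathbf{C}\otimes_K\mathbf{C}'$ when $\mathbf{C},\mathbf{C}'\succeq 0$, which holds for Gram matrices) to conclude the equivalence with $\text{InvOT}$.
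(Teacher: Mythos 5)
Your overall route -- expand the squared loss via the factorization $\mathbf{L}(\mathbf{C},\mathbf{C}')\otimes\GGs=c_{\mathbf{C},\mathbf{C}'}-2\mathbf{C}\GGs\mathbf{C}'$ from \cite{peyre2016gromov}, pass to $\vec$/Kronecker form to read off $\mathbf{Q}=-4\,\mathbf{C}\otimes_{K}\mathbf{C}'$, and note that the remaining terms are fixed by the marginals -- is essentially the paper's: for the GW line the paper simply cites \cite[Eq.~6]{vay2019fgw}, and for \COOT\ it symmetrizes $\langle\mathbf{L}\otimes\GGs,\GGv\rangle=\tfrac12\langle\mathbf{L}\otimes\GGs,\GGv\rangle+\tfrac12\langle\mathbf{L}\otimes\GGv,\GGs\rangle$ (valid because $\mathbf{C},\mathbf{C}'$ are symmetric) before applying the same factorization. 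Your handling of the quadratic/bilinear part and of the \COOT\ case ``same algebra with the second $\GGs$ replaced by $\GGv$, same marginals, hence the halved linear terms'' is fine.

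The genuine gap is your final account of the linear term $\mathbf{c}$. It does \emph{not} come from ``further expanding $\langle\mathbf{C}\GGs\mathbf{C}',\GGs\rangle$ using $\mathbf{C}=\xbf\one_{n}^{T}+\one_{n}\xbf^{T}-2\X\X^{T}$ as in Lemma~\ref{concavity_gw_theo_sup}'': that rank-one decomposition is specific to squared Euclidean distance matrices, whereas Proposition~\ref{prop:wrintinqapbap} is stated for arbitrary symmetric $\mathbf{C},\mathbf{C}'$ (in the InvOT application they are Gram/cosine matrices $\X\X^{T}$, which admit no such decomposition), so ``specializing'' Lemma~\ref{concavity_gw_theo_sup} to this setting is a non sequitur -- and indeed that lemma produces a different normalization ($-4\vec(\mathbf{M})$, $-8\vec(\GGs)^{T}\mathbf{Q}\vec(\GGs)$), not $\mathbf{c}$ and $\tfrac12\mathbf{Q}$. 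The term $\mathbf{c}^{T}\vec(\GGs)$ is nothing but the marginal part $\langle (\mathbf{C})^{2}\w\one_{n'}^{T}+\one_{n}\w'^{T}(\mathbf{C}')^{2},\cdot\rangle$ of the factorization: it is constant over $\Pi(\w,\w')$, but it must be \emph{kept} in linear form (not ``absorbed,'' as your first reading suggests) precisely so that the two objectives take the shapes $f(\xbf)=\mathbf{c}^{T}\xbf+\tfrac12\xbf^{T}\mathbf{Q}\xbf$ and $g(\xbf,\ybf)=\tfrac12\mathbf{c}^{T}\xbf+\tfrac12\mathbf{c}^{T}\ybf+\tfrac12\xbf^{T}\mathbf{Q}\ybf$ demanded by Theorem~\ref{equivalence_theo}. (There is also a discrepancy you should acknowledge rather than explain away: the correct marginal term involves the elementwise squares $(\mathbf{C})^{2},(\mathbf{C}')^{2}$, while the stated $\mathbf{c}$ uses $\mathbf{C}\w$ and $\w'\mathbf{C}'$; since both expressions are constant on the polytope this only shifts the objective by a constant and leaves the equivalence of minimizers -- the only thing used downstream -- untouched, and Theorem~\ref{equivalence_theo} explicitly tolerates an additive constant.) As written, your proposal leaves the precise claimed form underived and justifies it with an inapplicable computation.
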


\begin{proof}
For GW, we refer the reader to \cite[Equation 6]{vay2019fgw}. For \COOT\, we have:
{\small
\begin{equation*}
\begin{split}
&\COOT(\mathbf{C},\mathbf{C}',\w,\w')=\min_{\GGs \in\Pi(\w,\w'),\GGv\in\Pi(\w,\w')} \langle \mathbf{L}(\mathbf{C},\mathbf{C}') \otimes \GGs, \GGv \rangle \\
&= \min_{\GGs \in\Pi(\w,\w'),\GGv\in\Pi(\w,\w')} \frac{1}{2} \langle \mathbf{L}(\mathbf{C},\mathbf{C}') \otimes \GGs, \GGv \rangle + \frac{1}{2} \langle \mathbf{L}(\mathbf{C},\mathbf{C}') \otimes \GGs, \GGv \rangle \\
&= \min_{\GGs \in\Pi(\w,\w'),\GGv\in\Pi(\w,\w')} \frac{1}{2} \langle \mathbf{L}(\mathbf{C},\mathbf{C}') \otimes \GGs, \GGv \rangle + \frac{1}{2} \langle \mathbf{L}(\mathbf{C},\mathbf{C}') \otimes \GGv, \GGs \rangle \\
&=\min_{\GGs \in\Pi(\w,\w'),\GGv\in\Pi(\w,\w')} \frac{1}{2} \langle \mathbf{C}\w \one_{n'}^{T}+\one_{n}\w'\mathbf{C}',\GGs \rangle + \frac{1}{2} \langle \mathbf{C}\w \one_{n'}^{T}+\one_{n}\w'\mathbf{C}',\GGv \rangle -2\langle \mathbf{C} \GGs \mathbf{C}', \GGv \rangle.
\end{split}
\end{equation*}
}
Last equality gives the desired result.
\end{proof}

\begin{proof}[Proof of Proposition \ref{prop:cot_equal_gw_equal_invariantot_sup}]
Without loss of generality, we suppose that the columns of $\mathbf{C}=\X\X^T,\mathbf{C}'=\X'\X'^T$ are normalized. Then, we know from \cite[Lemma 4.3]{alavarez:2019}, that $GW(\mathbf{C},\mathbf{C}')$ and $\text{InvOT}_2^{||\cdot||_2^2}(\X, \X')$ are equivalent. It suffices to show that $GW(\mathbf{C},\mathbf{C}')$ and $\COOT(\mathbf{C},\mathbf{C}')$ are equivalent.
By virtue of Proposition \ref{prop:wrintinqapbap} the $\mathbf{Q}$ associated with the QP and BP problems of GW and \COOT\ is $\mathbf{Q}=-4\X\X^T\otimes_{K} \X'\X'^T$ which is a negative semi-definite matrix. This allows us to apply Theorem \ref{equivalence_theo} to prove that $GW(\mathbf{C},\mathbf{C}')$ and $\COOT(\mathbf{C},\mathbf{C}')$ are equivalent.
\end{proof}

\subsection{Relation with election isomorphism problem}
\label{sec:election_iso}
This section shows that \COOT\ approach can be used to solve the election isomorphism problem defined in \cite{faliszewski19} as follows: let $E=(C,V)$ and $E' = (C',V')$ be two elections, where $C = \{c_1,\dots,c_m\}$ (resp. $C'$) denotes a set of candidates and $V = (v_1, \dots,v_n)$ (resp. $V'$) denotes a set of voters, where each voter $v_i$ has a preference order, also denoted by $v_i$. The two elections $E=(C,V)$ and $E' = (C',V')$, where $\abs{C} = \abs{C'}$, $V = (v_1,\dots,v_n)$, and $V' = (v_1',\dots,v_n')$, are said to be isomorphic if there exists a bijection $\sigma: C \rightarrow C'$ and a permutation $\nu \in S_n$ such that $\sigma(v_i) = v'_{\nu(i)}$ for all $i \in [n]$. The authors further propose a distance underlying this problem defined as follows:
\begin{align*}
        \text{d-ID}(E,E') = \min_{\nu \in S_n} \min_{\sigma \in \Pi(C,C')} \sum_{i=1}^n d\left(\sigma(v_i),v'_{\nu(i)}\right),
\end{align*}
where $S_n$ denotes the set of all permutations over $\{1, \dots, n\}$,
$\Pi(C,C')$ is a set of bijections and $d$ is an arbitrary distance between
preference orders. The authors of \cite{faliszewski19} compute
$\text{d-ID}(E,E')$ in practice by expressing it as the following Integer Linear
Programming problem over the tensor $\mathbf{P}_{ijkl} = M_{ij}N_{kl}$ where
$\mathbf{M} \in \mathbb{R}^{m\times m}$, $\mathbf{N} \in \mathbb{R}^{n\times n}$
\begin{align}
\min_{\mathbf{P}, \mathbf{N}, \mathbf{M}} &\sum_{i,j,k,l} P_{k,l,i,j} \vert \text{pos}_{v_i}(c_k) - \text{pos}_{v'_j}(c'_l) \vert \notag\\
\text{s.t.} \quad & (\mathbf{N}\bm{1}_n)_k = 1,\ \forall k, (\mathbf{N}^\top\bm{1}_n)_l = 1,\ \forall l \label{ref:margelect}\\
							 & (\mathbf{M}\bm{1}_m)_i = 1,\ \forall i, (\mathbf{M}^\top\bm{1}_m)_j = 1,\ \forall j\notag\\
							 & P \leq N_{k,l}, P_{i,j,k,l} \leq M_{i,j}, \ \forall i,j,k,l\notag\\
							 & \sum_{i,k} P_{i,j,k,l} = 1, \ \forall j,l
\label{ref:electionot}
\end{align}
where $\text{pos}_{v_i}(c_k)$ denotes the position of candidate $c_k$ in the
preference order of voter $v_i$. Let us now define two matrices $\X$ and $\X'$
such that $\X_{i,k} = \text{pos}_{v_i}(c_k)$ and $\X'_{j,l} =
\text{pos}_{v'_j}(c'_l)$ and denote by $\GGs_{*},\GGv_{*}$ a minimizer of
$\COOT(\X,\X', \bm{1}_n/n, \bm{1}_n/n, \bm{1}_m/m, \bm{1}_m/m)$ with $L=|\cdot|$ and by
$\mathbf{N}^*, \mathbf{M}^*$ the minimizers of problem \eqref{ref:margelect},
respectively. 

As shown in the main paper, there exists an optimal solution for $\COOT(\X,\X')$ given by permutation matrices as solutions of the Monge-Kantorovich problems for uniform distributions supported on the same number of elements. Then, one may show that the solution of the two problems coincide modulo a multiplicative factor, \ie, $\GGs_{*} = \frac{1}{n} \mathbf{N}^*$ and  $\GGv_{*} = \frac{1}{m} \mathbf{M}^*$ are optimal since $\abs{C} = \abs{C'}$ and $\abs{V} = \abs{V'}$. For $\GGs_{*}$ (the same reasoning holds for $\GGv_{*}$ as well), we have that
\[
  (\GGs_{*})_{ij} = \left\{\begin{array}{l}
    \frac{1}{n}, \quad j = \nu^*_i \\
    0, \quad \text{otherwise}.
  \end{array}\right.
\]
where $\nu^*_i$ is a permutation of voters in the two sets. The only difference
between the two solutions $\GGs_{*}$ and $\mathbf{N}^*$ thus stems from marginal
constraints \eqref{ref:margelect}. To conclude, we note that \COOT\ is a more
general approach as it is applicable for general loss functions $L$, contrary to
the Spearman distance used in \cite{faliszewski19}, and generalizes to the cases
where $n\neq n'$ and $m\neq m'$.

\section{Additional experimental results}
\label{sec:expes_supp}
\subsection{Complementary results for the HDA experiment}
\label{sec:hda_supp}
Here, we present the results for the heterogeneous domain adaptation experiment not included in the main paper due to the lack of space. Table~\ref{tab:D_2_G} follows the same experimental protocol as in the paper but 
shows the two cases where $n_t=1$ and $n_t=5$. Table~\ref{tab:G_2_D_sup} and Table~\ref{tab:G_2_D_unsup} contain the results for the adaptation from GoogleNet to Decaf features,
in a semi-supervised and unsupervised scenarios, respectively Overall, the results are coherent with those from the main paper: in both settings, when $n_t=5$, one can see that the  performance differences between {SGW} and {COOT} is rather significant.
\begin{table}
	\begin{center}	
	\resizebox{\columnwidth}{!}{
	\begin{tabular}{ccccccc}
		\toprule
			\multicolumn{7}{c}{Decaf  $\rightarrow$ GoogleNet }\\
			\midrule
			{Domains} & {Baseline} & {CCA} & {KCCA} & {EGW} & {SGW} & {COOT}\\
						\midrule
			\multicolumn{7}{c}{$n_t=1$}\\
			\midrule
			C$\rightarrow$W & $30.47$$\pm 6.90$ & $13.37$$\pm 7.23$ & $29.21$$\pm 13.14$ & $10.21$$\pm 1.31$ & $\underline{ 66.95}$$\pm 7.61$ & $\bf 77.74$$\pm 4.80$\\
			W$\rightarrow$C & $26.53$$\pm 7.75$ & $16.26$$\pm 5.18$ & $40.68$$\pm 12.02$ & $10.11$$\pm 0.84$ & $\underline{ 80.16}$$\pm 4.78$ & $\bf 87.89$$\pm 2.65$\\
			W$\rightarrow$W & $30.63$$\pm 7.78$ & $13.42$$\pm 1.38$ & $36.74$$\pm 8.38$ & $8.68$$\pm 2.36$ & $\underline{ 78.32}$$\pm 5.86$ & $\bf 89.11$$\pm 2.78$\\
			W$\rightarrow$A & $30.21$$\pm 7.51$ & $12.47$$\pm 2.99$ & $39.11$$\pm 6.85$ & $9.42$$\pm 2.90$ & $\underline{ 80.00}$$\pm 3.24$ & $\bf 89.05$$\pm 2.84$\\
			A$\rightarrow$C & $41.89$$\pm 6.59$ & $12.79$$\pm 2.95$ & $28.84$$\pm 6.24$ & $9.89$$\pm 1.17$ & $\underline{ 72.00}$$\pm 8.91$ & $\bf 84.21$$\pm 3.92$\\
			A$\rightarrow$W & $39.84$$\pm 4.27$ & $19.95$$\pm 23.40$ & $38.16$$\pm 19.30$ & $12.32$$\pm 1.56$ & $\underline{ 75.84}$$\pm 7.37$ & $\bf 89.42$$\pm 4.24$\\
			A$\rightarrow$A & $42.68$$\pm 8.36$ & $15.21$$\pm 7.36$ & $38.26$$\pm 16.99$ & $13.63$$\pm 2.93$ & $\underline{ 75.53}$$\pm 6.25$ & $\bf 91.84$$\pm 2.48$\\
			C$\rightarrow$C & $28.58$$\pm 7.40$ & $18.37$$\pm 17.81$ & $35.11$$\pm 17.96$ & $11.05$$\pm 1.63$ & $\underline{ 61.21}$$\pm 8.43$ & $\bf 78.11$$\pm 5.77$\\
			C$\rightarrow$A & $31.63$$\pm 4.25$ & $15.11$$\pm 5.10$ & $33.84$$\pm 9.10$ & $11.84$$\pm 1.67$ & $\underline{ 66.26}$$\pm 7.95$ & $\bf 82.11$$\pm 2.58$\\
						\midrule
			\bf Mean & $33.61$$\pm 5.77$ & $15.22$$\pm 2.44$ & $35.55$$\pm 3.98$ & $10.80$$\pm 1.47$ & $\underline{ 72.92}$$\pm 6.37$ & $\bf 85.50$$\pm 4.89$\\
			\midrule
			\multicolumn{7}{c}{$n_t=5$}\\
			\midrule
			C$\rightarrow$W & $74.27$$\pm 5.53$ & $14.53$$\pm 7.37$ & $73.27$$\pm 4.99$ & $11.40$$\pm 1.13$ & $\underline{ 84.00}$$\pm 3.99$ & $\bf 85.53$$\pm 2.67$\\
			W$\rightarrow$C & $90.27$$\pm 2.67$ & $21.13$$\pm 6.85$ & $85.00$$\pm 3.44$ & $10.60$$\pm 1.05$ & $\bf 95.20$$\pm 2.84$ & $\underline{ 94.53}$$\pm 1.83$\\
			W$\rightarrow$W & $90.93$$\pm 2.50$ & $15.80$$\pm 3.27$ & $90.67$$\pm 2.95$ & $9.80$$\pm 2.60$ & $\bf 95.40$$\pm 2.47$ & $\underline{ 94.93}$$\pm 2.70$\\
			W$\rightarrow$A & $90.47$$\pm 2.92$ & $16.67$$\pm 4.85$ & $87.93$$\pm 2.47$ & $9.80$$\pm 2.68$ & $\underline{ 95.40}$$\pm 1.53$ & $\bf 95.80$$\pm 2.15$\\
			A$\rightarrow$C & $\underline{ 88.33}$$\pm 2.33$ & $15.73$$\pm 4.64$ & $83.13$$\pm 2.84$ & $10.40$$\pm 1.89$ & $84.47$$\pm 5.81$ & $\bf 91.47$$\pm 1.45$\\
			A$\rightarrow$W & $\underline{ 88.40}$$\pm 3.17$ & $13.60$$\pm 6.25$ & $87.27$$\pm 2.82$ & $11.87$$\pm 2.40$ & $87.87$$\pm 4.66$ & $\bf 93.00$$\pm 1.96$\\
			A$\rightarrow$A & $86.20$$\pm 3.08$ & $14.07$$\pm 2.93$ & $87.00$$\pm 3.48$ & $14.07$$\pm 1.65$ & $\underline{ 89.80}$$\pm 2.58$ & $\bf 92.20$$\pm 1.69$\\
			C$\rightarrow$C & $75.93$$\pm 4.83$ & $13.13$$\pm 2.98$ & $70.47$$\pm 3.45$ & $11.13$$\pm 1.52$ & $\bf 85.73$$\pm 3.54$ & $\underline{ 84.60}$$\pm 2.32$\\
			C$\rightarrow$A & $73.47$$\pm 3.62$ & $15.47$$\pm 6.50$ & $74.13$$\pm 5.42$ & $11.20$$\pm 2.47$ & $\underline{ 85.07}$$\pm 3.26$ & $\bf 87.20$$\pm 1.78$\\
			\midrule
			\bf Mean & $84.25$$\pm 7.01$ & $15.57$$\pm 2.25$ & $82.10$$\pm 7.03$ & $11.14$$\pm 1.23$ & $\underline{ 89.21}$$\pm 4.64$ & $\bf 91.03$$\pm 3.97$\\
			\bottomrule
			
		\end{tabular}
		}
	\end{center}
	\caption{\label{tab:D_2_G}{\bf Semi-supervised Heterogeneous Domain Adaptation} results for adaptation from Decaf  to GoogleNet  representations with different values of $n_t$. Note that the case $n_t$ is provided in the main paper.}
\end{table}

\begin{table}
	\begin{center}
	\resizebox{.9\columnwidth}{!}{
	\begin{tabular}{ccccccc}
		\toprule
			\multicolumn{7}{c}{GoogleNet  $\rightarrow$ Decaf }\\
			\midrule
			{Domains} & {Baseline} & {CCA} & {KCCA} & {EGW} & {SGW} & {COOT}\\
						\midrule
			\multicolumn{7}{c}{$n_t=1$}\\
			\midrule
			C$\rightarrow$A & $31.16$$\pm 6.87$ & $12.16$$\pm 2.78$ & $33.32$$\pm 2.47$ & $7.00$$\pm 2.11$ & $\underline{ 77.16}$$\pm 8.00$ & $\bf 83.26$$\pm 5.00$\\
			C$\rightarrow$C & $30.42$$\pm 3.73$ & $13.74$$\pm 5.29$ & $32.58$$\pm 9.98$ & $12.47$$\pm 2.81$ & $\underline{ 76.63}$$\pm 8.31$ & $\bf 86.21$$\pm 3.26$\\
			W$\rightarrow$A & $37.68$$\pm 4.04$ & $15.79$$\pm 3.71$ & $34.58$$\pm 5.71$ & $14.32$$\pm 1.77$ & $\underline{ 86.68}$$\pm 1.90$ & $\bf 89.95$$\pm 3.43$\\
			A$\rightarrow$C & $35.95$$\pm 3.89$ & $15.32$$\pm 8.18$ & $40.16$$\pm 17.54$ & $13.21$$\pm 3.49$ & $\underline{ 87.89}$$\pm 4.03$ & $\bf 90.68$$\pm 7.54$\\
			A$\rightarrow$A & $36.89$$\pm 4.73$ & $13.84$$\pm 2.47$ & $34.84$$\pm 10.44$ & $13.16$$\pm 1.56$ & $\underline{ 89.79}$$\pm 3.93$ & $\bf 94.68$$\pm 2.21$\\
			W$\rightarrow$W & $32.05$$\pm 4.63$ & $19.89$$\pm 11.82$ & $36.26$$\pm 21.98$ & $10.00$$\pm 2.59$ & $\underline{ 84.21}$$\pm 4.55$ & $\bf 90.42$$\pm 2.66$\\
			W$\rightarrow$C & $32.68$$\pm 5.56$ & $21.53$$\pm 21.01$ & $33.79$$\pm 22.72$ & $11.47$$\pm 3.03$ & $\underline{ 86.26}$$\pm 3.41$ & $\bf 89.53$$\pm 1.92$\\
			A$\rightarrow$W & $33.84$$\pm 4.75$ & $16.00$$\pm 7.74$ & $39.32$$\pm 18.94$ & $11.00$$\pm 4.01$ & $\underline{ 87.21}$$\pm 3.67$ & $\bf 91.53$$\pm 5.85$\\
			C$\rightarrow$W & $32.32$$\pm 7.76$ & $15.58$$\pm 7.72$ & $34.05$$\pm 15.96$ & $12.89$$\pm 2.52$ & $\underline{ 81.84}$$\pm 3.51$ & $\bf 84.84$$\pm 5.71$\\
			\midrule
			\bf Mean & $33.67$$\pm 2.45$ & $15.98$$\pm 2.81$ & $35.43$$\pm 2.50$ & $11.73$$\pm 2.08$ & $\underline{ 84.19}$$\pm 4.43$ & $\bf 89.01$$\pm 3.38$\\
			\midrule
			\multicolumn{7}{c}{$n_t=3$}\\
			\midrule
			C$\rightarrow$A & $76.35$$\pm 4.15$ & $17.47$$\pm 3.45$ & $73.94$$\pm 4.53$ & $7.41$$\pm 2.27$ & $\underline{ 88.24}$$\pm 2.23$ & $\bf 89.88$$\pm 0.94$\\
			C$\rightarrow$C & $78.94$$\pm 3.61$ & $18.18$$\pm 3.44$ & $69.94$$\pm 3.51$ & $14.18$$\pm 3.16$ & $\underline{ 89.71}$$\pm 2.25$ & $\bf 91.06$$\pm 1.91$\\
			W$\rightarrow$A & $85.41$$\pm 3.25$ & $19.29$$\pm 3.10$ & $80.59$$\pm 3.82$ & $14.24$$\pm 2.72$ & $\underline{ 94.76}$$\pm 1.45$ & $\bf 95.29$$\pm 2.35$\\
			A$\rightarrow$C & $89.53$$\pm 4.05$ & $23.18$$\pm 7.17$ & $80.59$$\pm 6.30$ & $13.88$$\pm 2.69$ & $\underline{ 93.76}$$\pm 2.72$ & $\bf 94.76$$\pm 1.83$\\
			A$\rightarrow$A & $89.76$$\pm 1.92$ & $17.00$$\pm 3.11$ & $83.71$$\pm 3.30$ & $14.41$$\pm 2.28$ & $\underline{ 93.29}$$\pm 2.09$ & $\bf 95.53$$\pm 1.45$\\
			W$\rightarrow$W & $86.65$$\pm 5.07$ & $21.88$$\pm 4.78$ & $84.65$$\pm 3.67$ & $9.94$$\pm 2.37$ & $\bf 94.88$$\pm 1.79$ & $\underline{ 94.53}$$\pm 1.66$\\
			W$\rightarrow$C & $88.94$$\pm 5.02$ & $22.59$$\pm 9.23$ & $80.06$$\pm 5.65$ & $13.65$$\pm 3.15$ & $\bf 96.18$$\pm 1.15$ & $\underline{ 95.29}$$\pm 2.91$\\
			A$\rightarrow$W & $90.29$$\pm 1.35$ & $22.35$$\pm 7.00$ & $87.88$$\pm 2.53$ & $13.88$$\pm 3.60$ & $\underline{ 94.53}$$\pm 1.54$ & $\bf 95.35$$\pm 1.59$\\
			C$\rightarrow$W & $78.59$$\pm 3.44$ & $22.53$$\pm 13.42$ & $80.12$$\pm 2.95$ & $11.59$$\pm 3.25$ & $\underline{ 89.29}$$\pm 1.86$ & $\bf 89.59$$\pm 2.22$\\
			\midrule
			\bf Mean & $84.94$$\pm 5.19$ & $20.50$$\pm 2.34$ & $80.16$$\pm 5.12$ & $12.58$$\pm 2.31$ & $\underline{ 92.74}$$\pm 2.72$ & $\bf 93.48$$\pm 2.38$\\
			\midrule
			\multicolumn{7}{c}{$n_t=5$}\\
			\midrule
			C$\rightarrow$A & $84.20$$\pm 2.65$ & $18.60$$\pm 3.75$ & $84.33$$\pm 2.33$ & $6.40$$\pm 1.27$ & $\bf 92.13$$\pm 2.61$ & $\underline{ 91.93}$$\pm 2.05$\\
			C$\rightarrow$C & $85.33$$\pm 2.76$ & $21.80$$\pm 5.91$ & $78.60$$\pm 2.74$ & $13.47$$\pm 2.00$ & $\underline{ 91.33}$$\pm 2.48$ & $\bf 92.27$$\pm 2.67$\\
			W$\rightarrow$A & $95.13$$\pm 2.29$ & $31.00$$\pm 9.67$ & $91.93$$\pm 2.82$ & $14.67$$\pm 1.40$ & $\underline{ 96.13}$$\pm 2.04$ & $\bf 96.40$$\pm 1.84$\\
			A$\rightarrow$C & $91.67$$\pm 2.60$ & $21.80$$\pm 4.35$ & $85.33$$\pm 3.27$ & $13.40$$\pm 3.63$ & $\bf 95.47$$\pm 1.51$ & $\underline{ 94.87}$$\pm 1.27$\\
			A$\rightarrow$A & $93.20$$\pm 1.57$ & $23.33$$\pm 4.66$ & $89.67$$\pm 1.98$ & $13.27$$\pm 2.10$ & $\bf 95.33$$\pm 1.07$ & $\underline{ 95.00}$$\pm 1.37$\\
			W$\rightarrow$W & $95.00$$\pm 2.33$ & $23.80$$\pm 5.48$ & $92.13$$\pm 1.78$ & $11.20$$\pm 2.58$ & $\underline{ 96.47}$$\pm 1.93$ & $\bf 96.67$$\pm 1.37$\\
			W$\rightarrow$C & $95.67$$\pm 1.50$ & $28.27$$\pm 9.71$ & $87.67$$\pm 3.79$ & $14.27$$\pm 3.19$ & $\bf 97.67$$\pm 1.31$ & $\underline{ 96.93}$$\pm 2.25$\\
			A$\rightarrow$W & $92.13$$\pm 2.36$ & $22.67$$\pm 3.94$ & $89.20$$\pm 3.14$ & $11.67$$\pm 2.50$ & $\underline{ 93.60}$$\pm 1.40$ & $\bf 94.27$$\pm 2.11$\\
			C$\rightarrow$W & $84.00$$\pm 3.45$ & $20.40$$\pm 4.31$ & $82.53$$\pm 3.56$ & $11.07$$\pm 3.70$ & $\underline{ 90.20}$$\pm 2.23$ & $\bf 92.40$$\pm 1.69$\\
			\midrule
			\bf Mean & $90.70$$\pm 4.57$ & $23.52$$\pm 3.64$ & $86.82$$\pm 4.26$ & $12.16$$\pm 2.37$ & $\underline{ 94.26}$$\pm 2.42$ & $\bf 94.53$$\pm 1.85$\\
			\bottomrule
				
		\end{tabular}
		}
	\end{center}
	\caption{\label{tab:G_2_D_sup}{\bf Semi-supervised Heterogeneous Domain Adaptation} results for adaptation  from GoogleNet  to Decaf  representations with different values of  $n_t$.}
\end{table}

\begin{table}
	\begin{center}
	\resizebox{0.65\columnwidth}{!}{
		\begin{tabular}{ccccc}
		\toprule
			\multicolumn{5}{c}{GoogleNet  $\rightarrow$ Decaf }\\
			\midrule
			{Domains} & {CCA} & {KCCA} & {EGW} & {COOT}\\
			\midrule
			C$\rightarrow$A & $11.30$$\pm 4.04$ & $\underline{ 14.60}$$\pm 8.12$ & $8.20$$\pm 2.69$ & $\bf 25.10$$\pm 11.52$\\
			C$\rightarrow$C & $13.35$$\pm 4.32$ & $\underline{ 17.75}$$\pm 10.16$ & $11.90$$\pm 2.99$ & $\bf 37.20$$\pm 14.07$\\
			W$\rightarrow$A & $14.55$$\pm 10.68$ & $\underline{ 25.05}$$\pm 24.73$ & $14.55$$\pm 2.05$ & $\bf 39.75$$\pm 17.29$\\
			A$\rightarrow$C & $13.80$$\pm 6.51$ & $\underline{ 20.70}$$\pm 17.94$ & $16.00$$\pm 2.44$ & $\bf 30.25$$\pm 18.71$\\
			A$\rightarrow$A & $16.90$$\pm 10.45$ & $\underline{ 28.95}$$\pm 30.62$ & $12.70$$\pm 1.79$ & $\bf 41.65$$\pm 16.66$\\
			W$\rightarrow$W & $14.50$$\pm 6.72$ & $\underline{ 24.05}$$\pm 19.35$ & $9.55$$\pm 1.77$ & $\bf 36.85$$\pm 9.20$\\
			W$\rightarrow$C & $13.15$$\pm 4.98$ & $\underline{ 14.80}$$\pm 8.79$ & $11.40$$\pm 2.65$ & $\bf 30.95$$\pm 17.18$\\
			A$\rightarrow$W & $10.85$$\pm 4.62$ & $\underline{ 14.40}$$\pm 12.36$ & $12.70$$\pm 2.99$ & $\bf 40.85$$\pm 16.21$\\
			C$\rightarrow$W & $18.25$$\pm 14.02$ & $\underline{ 25.90}$$\pm 25.40$ & $11.30$$\pm 3.87$ & $\bf 34.05$$\pm 13.82$\\
			\bf Mean & $14.07$$\pm 2.25$ & $\underline{ 20.69}$$\pm 5.22$ & $12.03$$\pm 2.23$ & $\bf 35.18$$\pm 5.24$\\
		\bottomrule
		\end{tabular}
		}
	\end{center}
		\caption{	\label{tab:G_2_D_unsup} {\bf Unsupervised Heterogeneous Domain Adaptation} results for adaptation from GoogleNet to Decaf representations.}
\end{table}

\subsection{Complementary information for the co-clustering experiment}
\label{sec:co-clust_supp}
Table \ref{tab:data_description} below summarizes the characteristics of the simulated data sets used in our experiment.
\begin{table}[h]
\centering
\resizebox{0.7\textwidth}{!}{
\begin{tabular}{lccccl}
\hline
Data set&$n\times d$&$g\times m$&Overlapping&Proportions\\
\hline
D1&$600\times300$&$3\times 3$&[+]&Equal\\
D2&$600\times300$&$3\times 3$&[+]&Unequal\\
D3&$300\times200$&$2\times 4$&[++]&Equal\\
D4&$300\times300$&$5\times4$&[++]&Unequal\\
\hline
\end{tabular}
}
\vspace{2mm}
\caption{\label{tab:data_description}Size ($n\times d$), number of co-clusters ($g\times m$), degree of overlapping ([+] for well-separated and [++] for ill-separated co-clusters) and the proportions of co-clusters for simulated data sets.}
\end{table}

\section{Initialization's impact}

We conducted a study regarding the convergence properties of COOT in the co-clustering application when the $\pi_s,\pi_v$ and $X_c$ are initialized randomly over $100$ trials. This leads to a certain variance in the obtained value of the COOT distance as expected when solving a non-convex problem. The obtained CCEs remain largely in line with the obtained results even for different random initializations.

\renewcommand{\arraystretch}{1.3}
\setlength{\intextsep}{5pt}%
\setlength{\columnsep}{5pt}%
\begin{table}[h]
\begin{center}
\resizebox{0.9\textwidth}{!}{
\begin{tabular}{lcccc}
\hline
\multirow{2}{*}{Data set} &\multicolumn{4}{c}{Characteristics}\\
\cline{2-5}
  & Runtime(s) & BCD \#iter. (COOT+$\X_c$) & BCD \#iter. (COOT) & COOT value\\
\hline
D1 & 4.72$\pm$6 & 21.5$\pm$24.57 & 3.16$\pm$0.37 &0.46$\pm$0.25\\
D2 & 0.64$\pm$0.81 & 9.77$\pm$11.53 & 3.4$\pm$0.58 &1.35$\pm$0.16\\
D3 & 0.95$\pm$1.55 & 8.47$\pm$11.11 & 3.01$\pm$0.1 &2.52$\pm$0.24\\
D4 & 6.27$\pm$5.13 & 33.15$\pm$23.75 & 4.21$\pm$0.41 & 0.06$\pm$0.005\\
\hline
\end{tabular}
}
\end{center}
\caption{ Mean ($\pm$ standard-deviation) of different runtime characteristics of COOT.
}
\end{table}

\bibliography{refs.bib}
	
\bibliographystyle{unsrt}

\end{document}